\documentclass[conference]{IEEEtran}
\IEEEoverridecommandlockouts
\usepackage{cite}
\usepackage{amsmath,amssymb,amsfonts}
\usepackage{algorithmic}
\usepackage{algorithm}
\usepackage{graphicx}
\usepackage[caption=false,font=footnotesize]{subfig}
\usepackage{textcomp}
\usepackage[dvipsnames,svgnames,table]{xcolor}
\usepackage{booktabs}
\usepackage{makecell}
\newcommand{\methodname}{ATONE}
\newcommand{\method}{\textbf{\texttt{\methodname}}}
\newcommand{\gray}[1]{\textcolor{gray}{#1}}
\newcommand{\venue}[1]{{\scriptsize\textcolor{gray!70!black}{[#1]}}}
\colorlet{bestcell}{red!13}
\colorlet{secondcell}{blue!10}
\newcommand{\best}[1]{\cellcolor{bestcell}\textbf{#1}}
\newcommand{\second}[1]{\cellcolor{secondcell}\textbf{#1}}
\DeclareRobustCommand{\swatch}[1]{{\setlength{\fboxsep}{0pt}\setlength{\fboxrule}{0.4pt}\raisebox{-0.2ex}{\fcolorbox{gray!80}{#1}{\rule{0pt}{1.8ex}\hspace{5ex}}}}}
\newcommand{\band}[1]{\gray{\textit{#1}}}
\definecolor{Band}{HTML}{8080FF}
\definecolor{hdrA}{HTML}{8A1F3D}
\definecolor{hdrB}{HTML}{16479B}
\definecolor{hdrC}{HTML}{15683A}
\definecolor{hdrD}{HTML}{4D4D4D}
\newcommand{\hd}[2]{\cellcolor{#1}\textcolor{white}{\textbf{#2}}}
\definecolor{oursname}{HTML}{FBE5D0}

\newcommand{\std}[1]{{\scriptsize\textcolor{ForestGreen}{($\pm$#1)}}}
\usepackage{tcolorbox}
\usepackage{ntheorem}
\theoremseparator{:}

\newtheorem{assumption}{Assumption}
\newtheorem{lemma}{Lemma}
\newtheorem{proposition}{Proposition}
\newtheorem{theorem}{Theorem}

\theoremstyle{nonumberplain}
\theorembodyfont{\normalfont}
\newtheorem{proof}{Proof}
\usepackage{hyperref}
\usepackage{balance}

\def\BibTeX{{\rm B\kern-.05em{\sc i\kern-.025em b}\kern-.08em
    T\kern-.1667em\lower.7ex\hbox{E}\kern-.125emX}}
\begin{document}
\bstctlcite{IEEEexample:BSTcontrol}

\title{Initiating Removal from Operational Symptoms in Federated Vehicular Networks}

\author{
\IEEEauthorblockN{Zihao Ding\textsuperscript{1}, Jun Huang\textsuperscript{1}, and Qiang Duan\textsuperscript{2}}
\IEEEauthorblockA{\textsuperscript{1}Department of Electrical Engineering and Computer Science, South Dakota State University, Brookings, SD 57007, USA\\
\textsuperscript{2}Information Sciences and Technology Department, Pennsylvania State University, Abington, PA 19001, USA\\
Email: zihao.ding@jacks.sdstate.edu, jun.huang@sdstate.edu, qduan@psu.edu}
}

\maketitle

\begin{abstract}
Connected vehicles train a driving model through federated learning (FL) and drive with it, so a faulty contribution often surfaces only as a delayed symptom on the road. Federated unlearning (FU) assumes that a removal request names what to forget. A symptom names no vehicle or round, and aggregation has already spread its cause across the fleet. We propose \underline{A}ggregation-\underline{T}ime \underline{O}perational-evidence \underline{N}eutralization of \underline{E}ffects (\method{}), which makes unlearning a term of the FL aggregation rule. \method{} projects everyday driving errors and reported symptoms onto a subspace spanned by recent global increments. It propagates the surviving effect of every contribution with curvature estimated from these increments and scores all contributions in one backward pass. A two-tier correction suppresses suspected effects in every round and eliminates confirmed ones after a one-round probe. We prove that backward scoring matches forward scoring and that the elimination is exact when training is flat outside the subspace.
On three driving datasets, \method{} leaves 43\% to 61\% less residual harm than the strongest baseline in eight of nine settings while staying within 0.36 points of full retraining in utility.
\end{abstract}

\begin{IEEEkeywords}
Federated unlearning, connected and autonomous vehicles, poisoning attacks, vehicular network security.
\end{IEEEkeywords}

\section{Introduction}
\label{sec:intro}

Mobile autonomous systems keep learning after deployment, so the knowledge they hold at the edge changes over their operational lifetime \cite{Li2026COMST}. Connected vehicles follow this pattern by training a shared driving model through federated learning (FL), coordinated by a roadside unit~\cite{Jin2026TVT}. The same model drives the fleet. A faulty contribution, whether mislabeled in fog or crafted by an attacker~\cite{Bagdasaryan2020AISTATS}, therefore reaches every vehicle through aggregation. Vehicles that leave the range of the roadside unit drive with an older copy until they reconnect, so the fault outlives the round that added it.

Federated unlearning (FU) removes the effect of an aggregated contribution~\cite{Cao2023SP,Pan2025AAAI,Yuan2025TIFS}. These methods share one assumption. A removal request from a departing client or from a data owner exercising the right to erasure~\cite{Ding2026ICDCS} names what to forget, so the procedure starts with a known target. A vehicle fleet receives no such request. Harm surfaces instead as a symptom, such as a near miss, many rounds later at a vehicle that never made the faulty contribution. A symptom says that the fleet has learned something wrong, but not which piece of learning is at fault. By then, aggregation has copied the contribution into every model downloaded since, so removing it at the source vehicle alone is not enough. A compromised vehicle may fabricate a symptom to frame an honest contributor.

Defenses against poisoning do not close this gap. Robust aggregation~\cite{Cao2021NDSS} and anomaly detection~\cite{Zhang2024TIFS} inspect updates when they are submitted. A contribution collected in fog, however, looks normal at that moment, and its contributor need not be malicious. Durable backdoors are also built to survive benign training long after the attack stops~\cite{Song2026INFOCOM}. Post-hoc tracing~\cite{Jia2025NeurIPS} names suspected clients, but it needs a labeled misclassified input and stored checkpoints, and it leaves removal to another method. It also scores the influence that a contribution once had, so it flags old effects that training has long since washed out.

The FL loop, however, carries the signals needed to infer the removal target. Every vehicle already differentiates its loss, so one more backward pass on recent driving data shows where the deployed model errs without a labeled failure case. The recent global increments also reveal how training shrinks an old effect. Influence functions need Hessian-vector products for this and are fragile in deep networks~\cite{Bae2022NeurIPS}, whereas the increments come free at the coordinator. Together, the two signals measure the harm a contribution still causes, which is what deserves removal. That harm can only persist along flat directions that training does not wash out, where it stays exactly as submitted and can therefore be subtracted at once.

Motivated by these insights, we design an in-loop unlearning method for vehicular FL. Aggregation-Time Operational-evidence Neutralization of Effects (\method{}) makes unlearning a term of the FL aggregation rule rather than a procedure run after an incident. \method{} projects everyday driving errors and reported symptoms onto the subspace of recent increments. It propagates the surviving effect of every contribution with curvature estimated from those increments and scores all contributions in one backward pass. A two-tier correction then suppresses suspected effects in every round and eliminates a confirmed one after a one-round probe, which also filters fabricated symptoms. The probe measures the utility loss of each elimination instead of estimating it. The coordinator stores no past model or update, and its only long-term state is one small curvature operator per window, regardless of the fleet size and the number of contributions it has aggregated.

Our contributions are summarized as follows:
\begin{itemize}
\item We formulate FU for a fleet whose removal target is inferred from delayed symptoms, as minimizing residual harm under per-round utility and total cost budgets.
\item We propose \method{}, which aggregates operational evidence, scores residuals in one backward pass, and corrects the model in two tiers inside the aggregation rule at the roadside unit, without storing any past checkpoint.
\item We prove that the propagation operator is the training curvature inside the evidence subspace, that backward and forward scoring agree, and that the two-part elimination returns the model trained without the removal target.
\item On three datasets, \method{} leaves 43\% to 61\% less residual harm than the strongest baseline in eight of nine settings and keeps utility within 0.36 points of retraining.
\end{itemize}

\section{System Model and Problem Formulation}
\label{sec:system}

\begin{figure}[t]
  \centering
  \includegraphics[width=0.95\columnwidth]{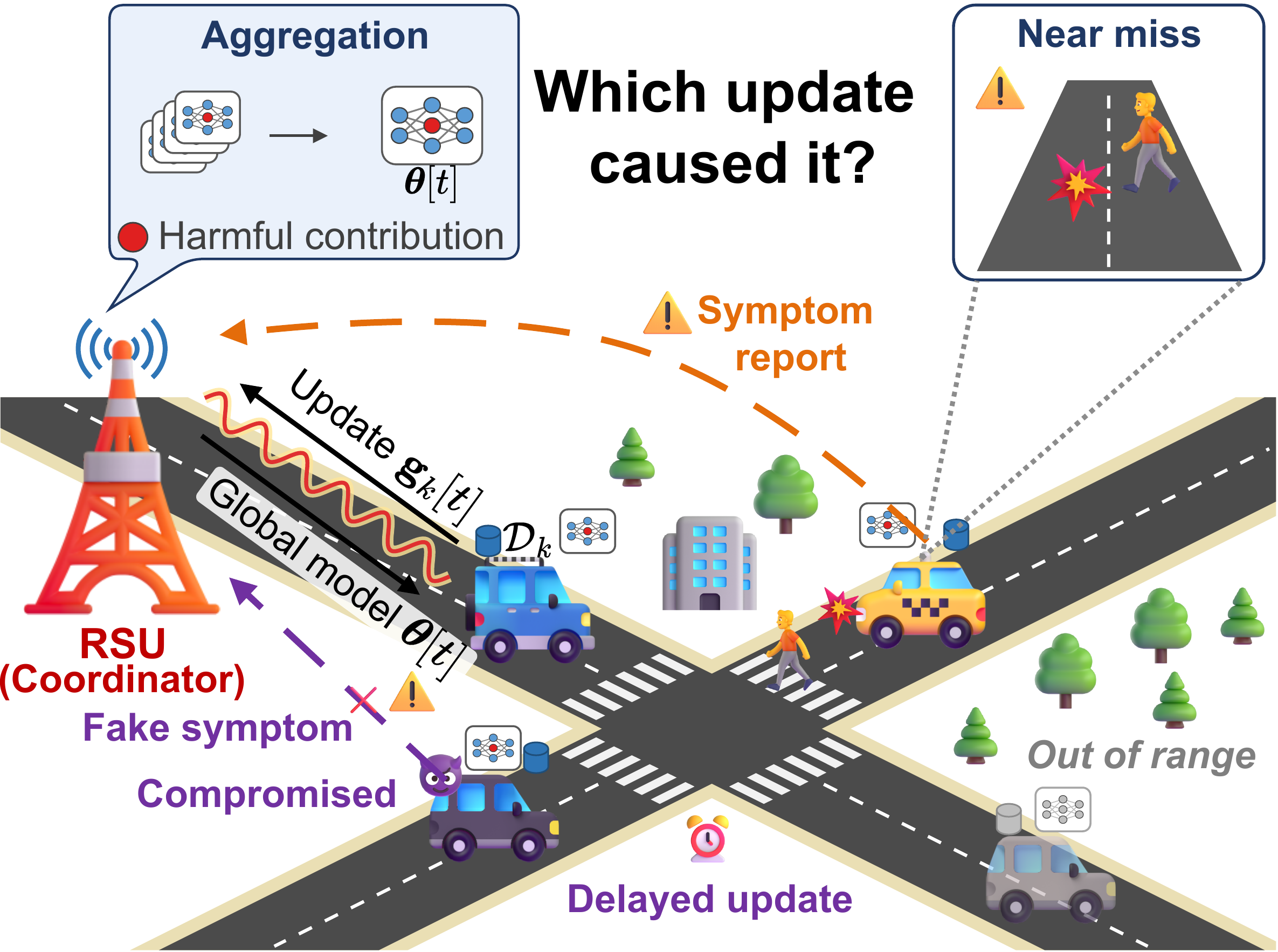}
  \caption{The system model of symptom-triggered unlearning.}
  \label{fig:system}
\end{figure}

\begin{figure*}[t]
  \centering
  \begin{minipage}[t]{0.48\textwidth}
    \centering
    \includegraphics[width=\linewidth]{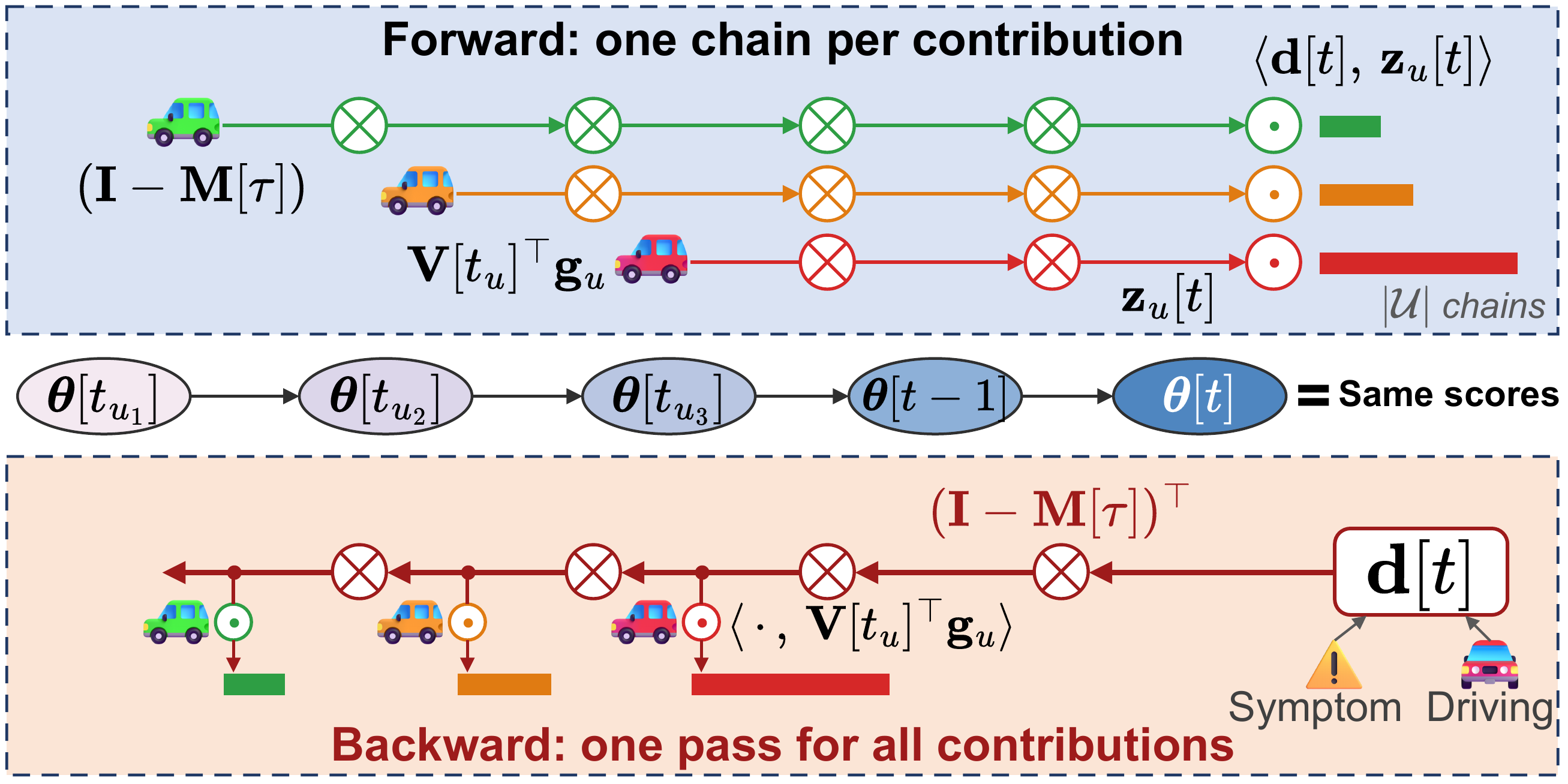}
    \caption{Forward residual propagation and its backward equivalent.}
    \label{fig:backward}
  \end{minipage}
  \hfill
  \begin{minipage}[t]{0.48\textwidth}
    \centering
    \includegraphics[width=\linewidth]{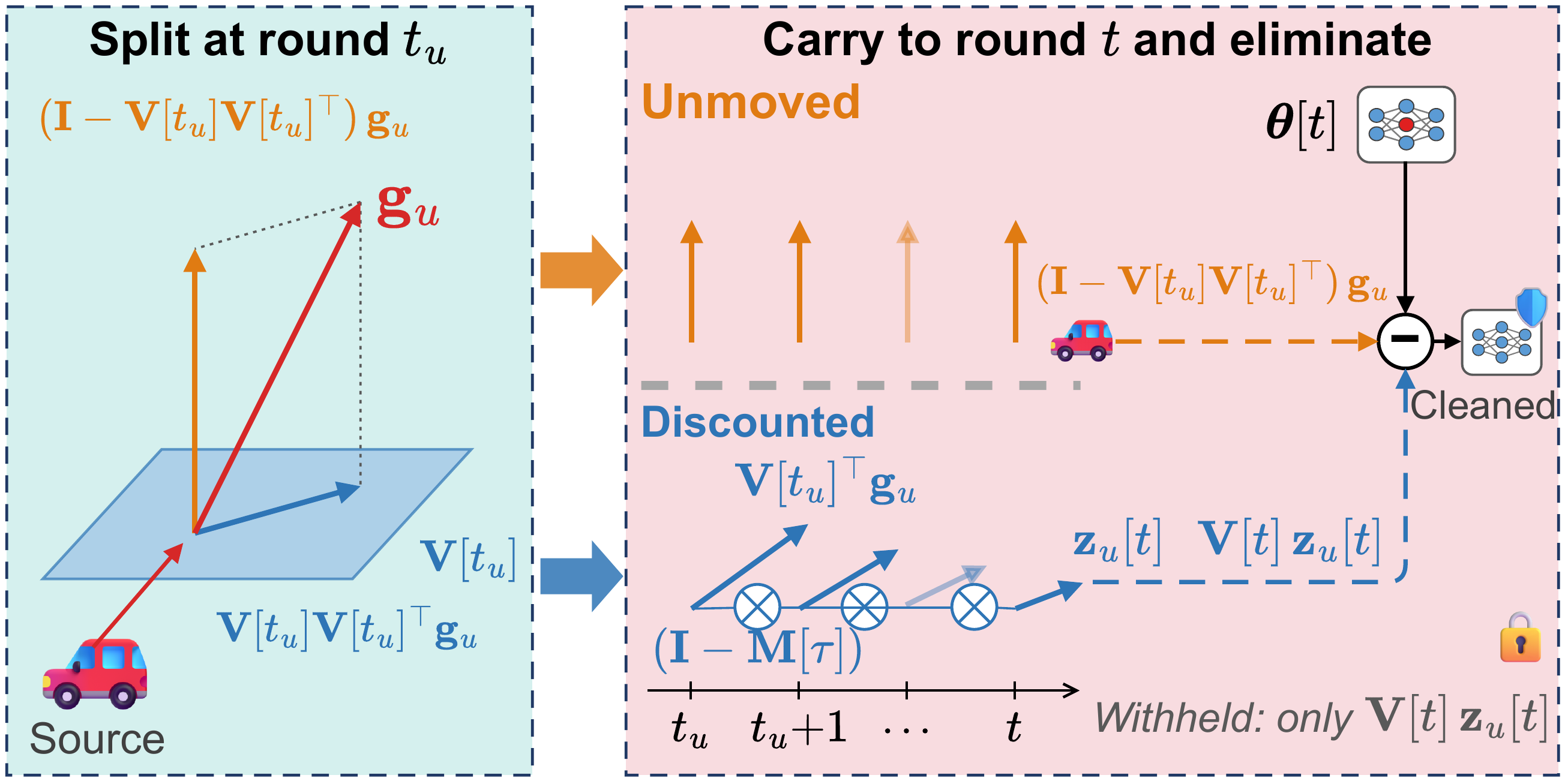}
    \caption{Split of a residual into an unmoved and a discounted part.}
    \label{fig:split}
  \end{minipage}
\end{figure*}

\subsection{System Model}
\label{subsec:system-model}

We consider a fleet of $N$ connected vehicles coordinated by a roadside unit, as illustrated in Fig.~\ref{fig:system}. Each vehicle $k$ maintains a local driving dataset $\mathcal{D}_k$ that remains on board, and $\mathcal{D}=\bigcup_{k=1}^{N}\mathcal{D}_k$ denotes the union of all local datasets. The vehicles collaboratively train a shared driving model $\boldsymbol{\theta}$ through federated learning (FL), with the global objective
\begin{equation}
    \min_{\boldsymbol{\theta}}\;\sum_{k=1}^{N} p_k\,\mathbb{E}_{(\mathbf{x},y)\sim\mathcal{D}_k}\!\left[\ell\!\left(\mathbf{x},y;\boldsymbol{\theta}\right)\right],
    \label{eq:fl-objective}
\end{equation}
where $\ell$ is the per-sample loss and $p_k=|\mathcal{D}_k|/|\mathcal{D}|$ reflects the relative data volume of vehicle $k$. At round $t$, the coordinator distributes $\boldsymbol{\theta}[t]$ to the vehicles currently in range. Each participating vehicle performs local stochastic gradient descent on $\mathcal{D}_k$ to obtain $\boldsymbol{\theta}_k[t]$ and returns the update $\mathbf{g}_k[t]=\boldsymbol{\theta}_k[t]-\boldsymbol{\theta}[t]$. The coordinator then aggregates the received updates as
\begin{equation}
    \boldsymbol{\theta}[t+1]\;=\;\boldsymbol{\theta}[t]+\sum_{k=1}^{N} p_k\,\mathbf{g}_{k}[t].
    \label{eq:fl-aggregation}
\end{equation}
Local epoch counts may vary with on-board computational capacity, while vehicles outside communication range continue operating with older model versions until they reconnect.

The deployed model is also used for driving, allowing operational failures to reveal problems that were not apparent during training. After a near miss, an inconsistent prediction, or a disagreement with another vehicle, a vehicle reports the observed event to the coordinator. We refer to such an observation as a symptom and denote by $\mathcal{S}[1{:}t]$ all symptoms received up to round $t$. A symptom indicates that some learned behavior may be faulty, but does not directly identify the training contribution responsible for it. We use a contribution $u$ to represent the updates $\mathbf{g}_k[t]$ submitted by one vehicle over a window of rounds and within a particular scope, such as an object class or road segment, and let $\mathcal{U}$ denote all contributions incorporated into the model. Through Eq.~\eqref{eq:fl-aggregation}, the effect of a contribution propagates to vehicles that subsequently download the global model. Its holder set $\mathcal{A}_u\subseteq[N]$ can therefore expand across the fleet, making partial removal insufficient to eliminate its effect. We assume that an adversary controls a fraction $f$ of the vehicles. Compromised vehicles may collude, fabricate symptoms, and delay their updates, but cannot impersonate honest vehicles or read their raw data.

\subsection{Problem Formulation}
\label{subsec:problem-formulation}

Conventional federated unlearning (FU) assumes that a removal request specifies what must be forgotten, such as data associated with a departing client or a data owner exercising a deletion right. The unlearning procedure therefore starts with the removal target already known. This assumption does not hold in an autonomous vehicle fleet. A harmful contribution may become visible only after its effect produces an operational symptom, potentially many rounds after Eq.~\eqref{eq:fl-aggregation} has propagated it through the global model. Moreover, the observed symptom contains no explicit correspondence to the vehicle, update window, or data scope responsible for the behavior. The removal target must therefore be inferred from delayed operational evidence rather than supplied externally, which is the central problem considered in this work.

Let $\phi_u\in\{0,1\}$ indicate whether contribution $u$ is faulty, $\rho_u[t]\in[0,1]$ denote the fraction of its effect that remains in $\boldsymbol{\theta}[t]$, and $h(u)$ quantify the harm caused by that effect to the driving task. The residual harm in the fleet at round $t$ is
\begin{equation}
    H[t]\;=\;\sum_{u\in\mathcal{U}}\phi_u\,\rho_u[t]\,h(u).
    \label{eq:residual-harm}
\end{equation}

At round $t$, the coordinator selects a removal set $\mathcal{U}[t]\subseteq\mathcal{U}$ whose effects are to be cleared from their holders. Removing these contributions incurs communication and computation cost $C(\mathcal{U}[t])$ and may reduce task utility by $\Delta(\mathcal{U}[t])$. The objective is to minimize the average residual harm over $T$ rounds while limiting both removal-induced utility degradation and the total removal cost:
\begin{equation}
\begin{split}
    (\mathrm{P}):\;\;\min_{\{\mathcal{U}[t]\}}\;\;
    & \frac{1}{T}\sum_{t=1}^{T} H[t] \\
    \mathrm{s.t.}\;\;
    & \Delta\!\left(\mathcal{U}[t]\right) \le \epsilon, \\
    & \sum_{t=1}^{T} C\!\left(\mathcal{U}[t]\right) \le B.
\end{split}
\label{eq:problem}
\end{equation}
The first constraint prevents indiscriminate removal that sacrifices the driving task, while the second bounds the total communication and computation devoted to removal.

Problem~$(\mathrm{P})$ cannot be solved directly because its key quantities are not observable to the coordinator. In particular, $\phi_u$ is unknown, directly measuring $\rho_u[t]$ would require constructing a counterfactual model trained without $u$, and the coordinator observes only the delayed symptom history $\mathcal{S}[1{:}t]$. These symptoms provide evidence of harmful behavior but no label linking a failure to the contribution that caused it.
\section{\texorpdfstring{\method{}}{ATONE}: Aggregation-Time Operational-Evidence Neutralization of Effects}
\label{sec:method}

To solve $(\mathrm{P})$ in Eq.~\eqref{eq:problem}, we design \underline{\textbf{A}}ggregation-\underline{\textbf{T}}ime \underline{\textbf{O}}perational-evidence \underline{\textbf{N}}eutralization of \underline{\textbf{E}}ffects (\method{}), an in-loop unlearning method whose three modules are described in Sections~\ref{subsec:evidence}, \ref{subsec:propagation}, and~\ref{subsec:correction}. Fig.~\ref{fig:backward} and Fig.~\ref{fig:split} illustrate the two steps that the equations alone leave implicit.

\subsection{Operational Evidence Aggregation}
\label{subsec:evidence}

This module turns the symptoms of Section~\ref{subsec:system-model}, together with the ordinary experience of driving, into one direction the coordinator can act on. A vehicle running local training already differentiates the loss, so stating where the deployed model is wrong costs one further backward pass and no labeled failure case. Reports must nevertheless be small and comparable, which calls for a common low-dimensional subspace.

Let $\mathbf{w}[\tau]=\boldsymbol{\theta}[\tau+1]-\boldsymbol{\theta}[\tau]$ denote the global increment of round $\tau$, which is the aggregated term of Eq.~\eqref{eq:fl-aggregation}. The evidence subspace spans the $q$ most recent increments,
\begin{equation}
    \mathbf{V}[t]=\mathrm{orth}\!\left(\mathbf{W}[t]\right),\;\;
    \mathbf{W}[t]=\left[\mathbf{w}[t-1],\ldots,\mathbf{w}[t-q]\right],
    \label{eq:basis}
\end{equation}
where $q$ is the subspace dimension and $\mathrm{orth}(\cdot)$ returns an orthonormal basis of the column space. Every vehicle rebuilds $\mathbf{V}[t]$ from the consecutive global models it has downloaded, so the subspace is shared without any extra broadcast.

Vehicle $k$ evaluates the loss gradient on its recent driving data and uploads the $q$ projected coefficients alongside its update. The coordinator combines them with the weights of Eq.~\eqref{eq:fl-objective} into the fleet evidence $\mathbf{d}[t]$. A symptom in $\mathcal{S}[1{:}t]$ travels through the same channel and enters $\mathbf{d}[t]$ with a larger weight, so sparse reported failures and dense everyday error accumulate in one quantity of only $q$ coefficients.

\subsection{Residual Propagation and Scoring}
\label{subsec:propagation}

The evidence $\mathbf{d}[t]$ says how the model errs but not which contribution is responsible. The two are connected by the residual $\boldsymbol{\theta}[t]-\boldsymbol{\theta}^{-u}[t]$ of contribution $u$, where $\boldsymbol{\theta}^{-u}[t]$ is the model that would have been obtained had $u$ never been aggregated. Reading it off directly requires that counterfactual model, so \method{} propagates it instead, using one linear operator that all contributions share.

The propagation operator $\mathbf{M}[\tau]\in\mathbb{R}^{q\times q}$ is computed from increments that the coordinator already holds,
\begin{equation}
    \mathbf{M}[\tau] \;=\; -\,\mathbf{V}[\tau]^{\top}\left(\mathbf{W}[\tau+1]-\mathbf{W}[\tau]\right)\mathbf{W}[\tau]^{\dagger}\,\mathbf{V}[\tau],
    \label{eq:curvature}
\end{equation}
where $(\cdot)^{\dagger}$ denotes the pseudo-inverse. Each column of $\mathbf{W}[\tau+1]-\mathbf{W}[\tau]$ is the difference of two consecutive increments. Since every increment approximates a scaled negative gradient, this difference approximates a curvature product, which \textbf{Lemma}~\ref{lem:curvature} makes exact. No Hessian-vector product is computed on board, and the coordinator keeps one operator per window regardless of the fleet size $N$ and of $|\mathcal{U}|$.

Let $\mathbf{g}_u$ collect the weighted updates $p_k\mathbf{g}_k[\tau]$ of contribution $u$, and let $\boldsymbol{\theta}[t_u]$ be the first global model that contains them. Applying Eq.~\eqref{eq:curvature} from $t_u$ to the present gives
\begin{equation}
    \mathbf{z}_u[t] \;=\; \prod_{\tau=t_u}^{t-1}\left(\mathbf{I}-\mathbf{M}[\tau]\right)\mathbf{V}[t_u]^{\top}\mathbf{g}_u,
    \label{eq:propagate}
\end{equation}
where $\mathbf{I}$ is the identity matrix and the product places later rounds on the left. The surviving fraction in Eq.~\eqref{eq:residual-harm} is $\rho_u[t]=\|\mathbf{z}_u[t]\|/\|\mathbf{V}[t_u]^{\top}\mathbf{g}_u\|$, and $u$ is scored by $\langle\mathbf{d}[t],\mathbf{z}_u[t]\rangle$. The score is large when what survives of $u$ points where the fleet errs now, which is residual harm, not its past influence, as \textbf{Proposition}~\ref{prop:harm} in Section~\ref{sec:analysis} shows to first order.

Evaluating Eq.~\eqref{eq:propagate} for every contribution would scale with $|\mathcal{U}|$. The coordinator instead multiplies $\mathbf{d}[t]$ by $(\mathbf{I}-\mathbf{M}[\tau])^{\top}$ from $\tau=t-1$ down to the oldest $t_u$. The score of $u$ is the inner product of the vector reached at $t_u$ with $\mathbf{V}[t_u]^{\top}\mathbf{g}_u$. One backward pass thus serves every contribution, and each score costs one $q$-dimensional inner product. Fig.~\ref{fig:backward} contrasts the two directions, which yield the same score by \textbf{Proposition}~\ref{prop:backward}.

\subsection{Two-Tier In-Loop Correction}
\label{subsec:correction}

This module acts on the model using the scores of Section~\ref{subsec:propagation}. It runs in two tiers, because the two parts of a residual are not recovered at the same cost. The first tier fires in every round and augments Eq.~\eqref{eq:fl-aggregation} with a suppression term,
\begin{equation}
    \boldsymbol{\theta}[t+1]\;=\;\boldsymbol{\theta}[t]+\sum_{k=1}^{N} p_k\,\mathbf{g}_k[t]\;-\;\sum_{u\in\mathcal{U}}\lambda_u[t]\,\mathbf{V}[t]\,\mathbf{z}_u[t],
    \label{eq:suppression}
\end{equation}
where the gain $\lambda_u[t]$ increases with the score of $u$ and is capped so that the per-round utility loss stays within $\epsilon$. This tier needs no trigger and adds nothing to the uplink, but it reaches only the part of the residual lying inside $\mathbf{V}[t]$.

The second tier fires when a score exceeds a threshold $\kappa$. The coordinator broadcasts the candidate counterfactual model for one round and collects from a few vehicles whether the reported error is removed and how far clean accuracy moves. The latter measures $\Delta(\mathcal{U}[t])$ instead of estimating it, and a candidate derived from a fabricated report improves no genuine behavior and is discarded. On confirmation, the coordinator eliminates $u$ by subtracting from $\boldsymbol{\theta}[t]$
\begin{equation}
    \underbrace{\left(\mathbf{I}-\mathbf{V}[t_u]\mathbf{V}[t_u]^{\top}\right)\mathbf{g}_u}_{\text{unmoved}} \;+\; \underbrace{\mathbf{V}[t]\,\mathbf{z}_u[t]}_{\text{discounted}},
    \label{eq:elimination}
\end{equation}
whose two terms are shown in Fig.~\ref{fig:split}. The first is returned once by the source vehicle and subtracted as submitted, because training leaves those directions untouched. The second is discounted so that subtraction does not overshoot. \textbf{Theorem}~\ref{thm:elimination} shows that the two terms together equal the residual of $u$. If the source withholds its update, only the second term is applied, which still clears the harm counted by $H[t]$. The source is then suspended for the affected scope in later rounds.

Beyond the training of Eqs.~\eqref{eq:fl-objective}--\eqref{eq:fl-aggregation}, \method{} adds $q$ coefficients per vehicle per round on the uplink, one model-sized exchange whenever the second tier fires, and one $q\times q$ operator per window on the coordinator. A smaller $\kappa$ eliminates sooner, which shortens the interval over which $H[t]$ accumulates and, by \textbf{Theorem}~\ref{thm:robust}, keeps the elimination error small. The cost is more probes and faster consumption of $B$.

\begin{figure*}[t]
  \centering
  \begin{minipage}[t]{0.48\textwidth}
    \centering
    \includegraphics[width=\linewidth]{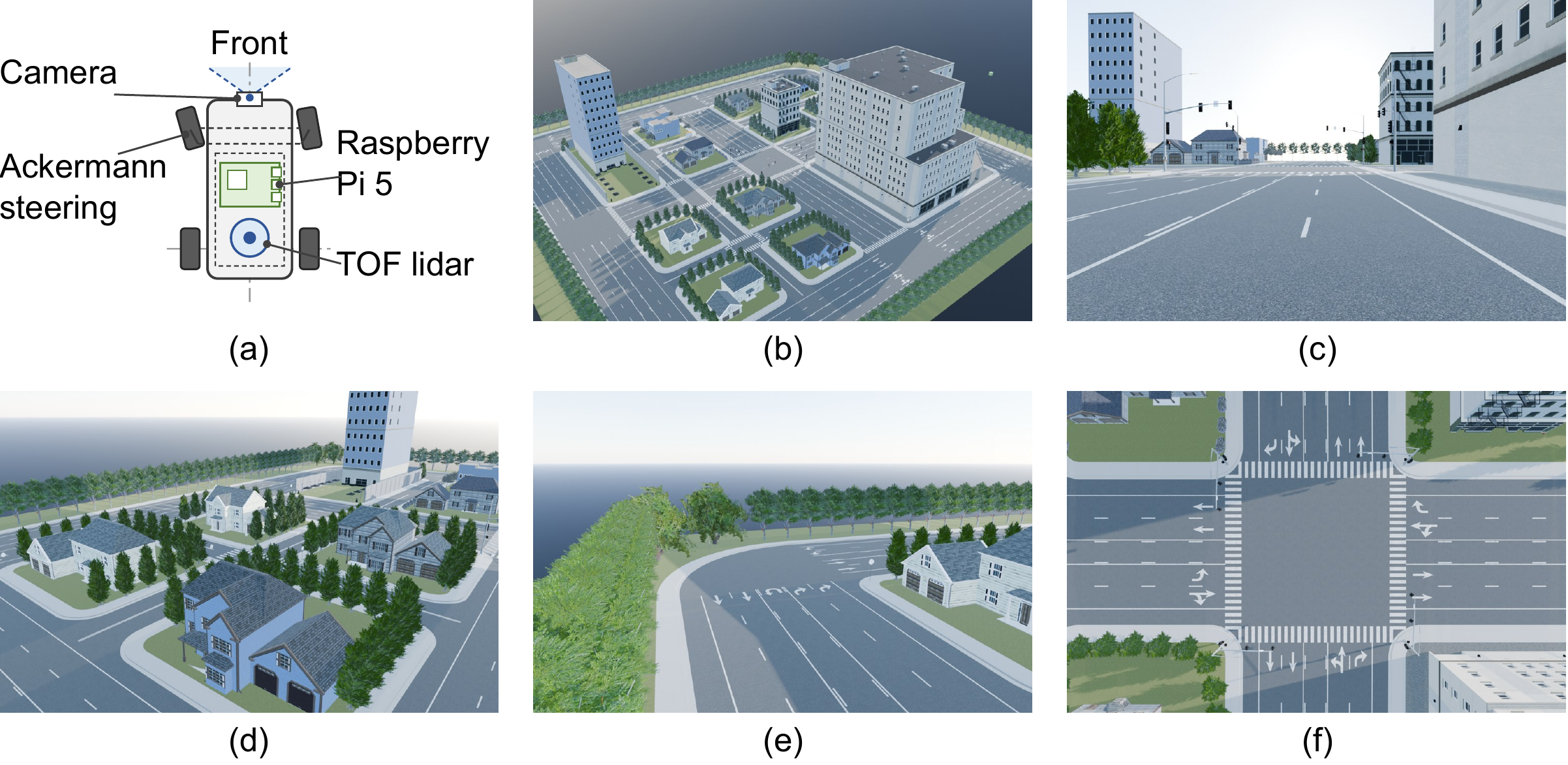}
    \caption{(a) Top view of the MentorPi robot car. (b)--(f) Simulated city: overview, vehicle view, residential block, curve, and intersection.}
    \label{fig:platform}
  \end{minipage}
  \hfill
  \begin{minipage}[t]{0.48\textwidth}
    \centering
    \includegraphics[width=\linewidth]{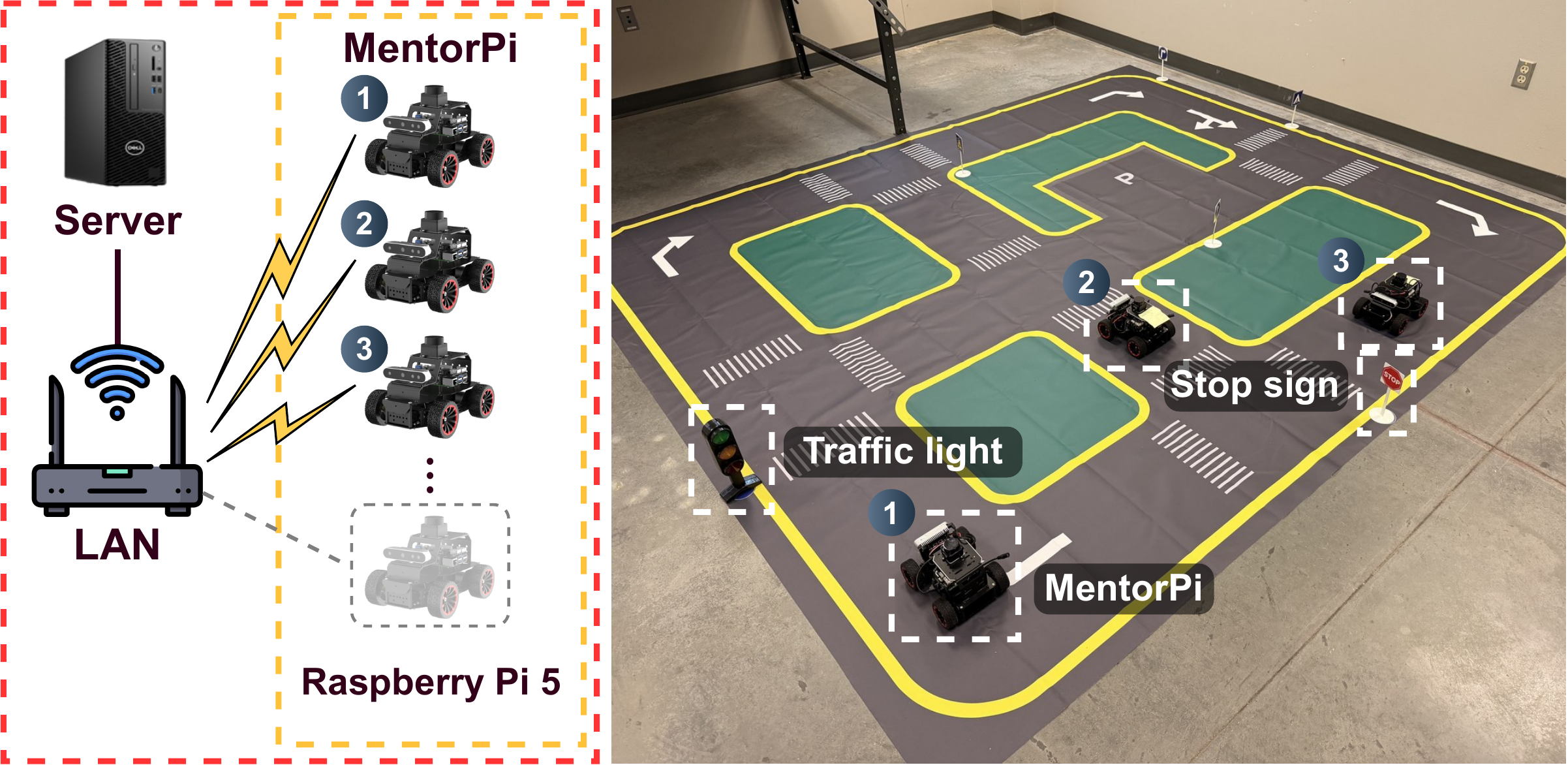}
    \caption{Hardware testbed. MentorPi robot cars exchange models and updates with a server over a LAN while driving on a physical track.}
    \label{fig:testbed}
  \end{minipage}
\end{figure*}

\begin{table*}[!t]
\centering
\setlength{\tabcolsep}{3pt}
\setlength{\aboverulesep}{0pt}
\setlength{\belowrulesep}{0pt}
\setlength{\extrarowheight}{0pt}
\renewcommand{\arraystretch}{0.88}
\caption{Unlearning and residual harm per method and dataset, and removal cost on the testbed. $\mathrm{FmIoU}$, $\mathrm{RmIoU}$, $\mathrm{TmIoU}$, and $\mathrm{MIA}$ are in \%, $\bar{H}$ in percentage points, and the delay in rounds. Each value is the mean over five random seeds with the standard deviation in parentheses. The testbed columns come from the Raspberry Pi 5 cars trained on Foggy Cityscapes, with the time in seconds and the uplink in MB that the removal adds. $\mathrm{FmIoU}$ and $\mathrm{MIA}$ are best when closest to Retrain of the same unlearning type. A \swatch{bestcell} cell marks the best value and a \swatch{secondcell} cell the second best within each unlearning type, excluding Retrain.}
\label{tab:main}
\resizebox{\linewidth}{!}{\begin{tabular}{l|cccccc|cccccc|cccccc|cccc}
\Xhline{1.2pt}
 & \multicolumn{6}{c|}{\hd{hdrA}{Foggy Cityscapes}} & \multicolumn{6}{c|}{\hd{hdrB}{ACDC}} & \multicolumn{6}{c|}{\hd{hdrC}{BDD100K}} & \multicolumn{4}{c}{\hd{hdrD}{Testbed}}\\
\textbf{Method} & FmIoU & RmIoU$\uparrow$ & TmIoU$\uparrow$ & MIA & $\bar{H}{\downarrow}$ & Delay$\downarrow$ & FmIoU & RmIoU$\uparrow$ & TmIoU$\uparrow$ & MIA & $\bar{H}{\downarrow}$ & Delay$\downarrow$ & FmIoU & RmIoU$\uparrow$ & TmIoU$\uparrow$ & MIA & $\bar{H}{\downarrow}$ & Delay$\downarrow$ & FmIoU & RmIoU$\uparrow$ & Time$\downarrow$ & Uplink$\downarrow$\\
\Xhline{1.2pt}
\multicolumn{23}{l}{\band{(a) Client unlearning}}\\
\Xhline{0.6pt}
FedRecover \venue{S\&P'23}~\cite{Cao2023SP} & 0.18\std{0.03} & 64.80\std{0.64} & 64.03\std{0.57} & 56.2\std{2.5} & 0.46\std{0.07} & 7.2\std{1.1} & 0.32\std{0.04} & 63.98\std{0.36} & 63.93\std{0.58} & 56.5\std{12.2} & 0.58\std{0.08} & 8.5\std{1.5} & 0.30\std{0.04} & 64.38\std{0.57} & 63.74\std{0.59} & 57.7\std{6.7} & 0.38\std{0.06} & 6.8\std{1.1} & 0.22\std{0.04} & 64.36\std{0.68} & 192.4\std{16.7} & 307.964\std{26.788}\\
FedOSD \venue{AAAI'25}~\cite{Pan2025AAAI} & 0.29\std{0.05} & 63.47\std{0.91} & 63.07\std{0.77} & 61.3\std{3.4} & 0.71\std{0.12} & 11.4\std{1.8} & 1.12\std{0.18} & 63.56\std{0.37} & 62.66\std{0.71} & 61.2\std{14.7} & 1.26\std{0.20} & 17.6\std{2.4} & 0.41\std{0.06} & 63.72\std{0.93} & 63.39\std{0.51} & 60.0\std{6.6} & 0.63\std{0.11} & 10.2\std{1.8} & 0.34\std{0.07} & 62.87\std{0.81} & 68.7\std{6.4} & \second{83.768\std{9.111}}\\
FedEditor \venue{TIFS'25}~\cite{Yuan2025TIFS} & \second{0.13\std{0.03}} & \second{65.21\std{0.73}} & \second{64.82\std{0.83}} & \second{53.8\std{2.5}} & \second{0.33\std{0.05}} & \second{5.6\std{1.0}} & \second{0.27\std{0.06}} & \second{64.76\std{0.89}} & \second{64.75\std{0.89}} & \second{51.8\std{12.0}} & \second{0.40\std{0.05}} & \second{6.2\std{0.6}} & \second{0.26\std{0.04}} & \second{65.66\std{0.61}} & \second{64.85\std{0.57}} & \second{52.8\std{7.2}} & \second{0.29\std{0.05}} & \second{4.9\std{0.9}} & \second{0.16\std{0.03}} & \second{65.12\std{0.74}} & \best{35.6\std{2.8}} & \best{27.074\std{9.507}}\\
FLForensics \venue{NeurIPS'25}~\cite{Jia2025NeurIPS} & 0.26\std{0.05} & 64.31\std{0.42} & 63.84\std{0.35} & 59.8\std{2.8} & 1.05\std{0.15} & 16.3\std{2.3} & 0.37\std{0.04} & 64.28\std{0.28} & 63.55\std{0.55} & 61.8\std{11.8} & 1.44\std{0.23} & 19.7\std{3.5} & 0.36\std{0.08} & 64.60\std{0.47} & 63.96\std{0.73} & 60.4\std{6.1} & 0.93\std{0.15} & 14.6\std{1.8} & 0.32\std{0.06} & 64.07\std{0.63} & 261.8\std{24.6} & 353.061\std{36.545}\\
FLTracer \venue{TIFS'24}~\cite{Zhang2024TIFS} & 0.23\std{0.03} & 64.59\std{0.87} & 63.87\std{0.42} & 57.3\std{3.2} & 0.83\std{0.15} & 13.8\std{2.4} & 0.40\std{0.05} & 64.05\std{0.80} & 63.64\std{0.78} & 58.2\std{10.2} & 1.09\std{0.19} & 15.4\std{1.8} & 0.35\std{0.07} & 64.57\std{0.83} & 64.04\std{0.70} & 58.6\std{7.4} & 0.74\std{0.11} & 12.3\std{1.4} & 0.29\std{0.05} & 64.48\std{0.59} & 398.5\std{37.2} & 409.755\std{31.444}\\
\Xhline{0.8pt}
\cellcolor{oursname}\textit{\textbf{\methodname{}}} & \best{0.11\std{0.02}} & \best{65.85\std{0.78}} & \best{65.25\std{0.40}} & \best{51.5\std{2.6}} & \best{0.14\std{0.02}} & \best{3.1\std{0.6}} & \best{0.25\std{0.05}} & \best{65.62\std{0.33}} & \best{64.94\std{0.36}} & \best{50.0\std{10.5}} & \best{0.23\std{0.03}} & \best{4.3\std{0.8}} & \best{0.25\std{0.06}} & \best{66.10\std{0.83}} & \best{65.57\std{0.64}} & \best{50.5\std{6.5}} & \best{0.12\std{0.01}} & \best{2.7\std{0.5}} & \best{0.12\std{0.02}} & \best{65.76\std{0.67}} & \second{54.8\std{4.2}} & 99.229\std{13.650}\\
\Xhline{0.8pt}
\textit{Retrain} & 0.09\std{0.02} & 66.21\std{0.44} & 65.47\std{0.74} & 50.4\std{3.1} & 0.00 & - & 0.20\std{0.03} & 65.82\std{0.28} & 65.24\std{0.94} & 49.4\std{9.7} & 0.00 & - & 0.20\std{0.03} & 66.32\std{0.93} & 65.73\std{0.74} & 49.8\std{5.2} & 0.00 & - & 0.11\std{0.03} & 66.09\std{0.71} & - & -\\
\Xhline{0.8pt}
\multicolumn{23}{l}{\band{(b) Class unlearning}}\\
\Xhline{0.6pt}
FedRecover \venue{S\&P'23}~\cite{Cao2023SP} & 45.81\std{0.89} & 64.01\std{0.36} & 64.23\std{0.91} & 56.5\std{2.9} & 1.13\std{0.14} & 14.6\std{1.8} & 22.62\std{0.79} & 64.20\std{0.85} & 63.15\std{0.67} & 58.8\std{8.5} & 1.65\std{0.27} & 18.4\std{2.7} & 38.56\std{0.53} & 64.68\std{0.35} & 64.27\std{0.39} & 56.5\std{5.9} & 0.97\std{0.15} & 12.4\std{2.4} & 45.72\std{0.64} & 63.96\std{0.73} & 138.2\std{8.6} & 222.924\std{20.191}\\
FedOSD \venue{AAAI'25}~\cite{Pan2025AAAI} & 46.33\std{0.70} & 63.19\std{0.87} & 63.60\std{0.57} & 60.7\std{3.5} & 1.57\std{0.28} & 19.3\std{2.9} & 23.20\std{0.91} & 63.06\std{0.84} & 62.83\std{0.44} & 65.4\std{8.7} & 2.17\std{0.53} & 25.1\std{4.1} & 39.07\std{0.47} & 63.90\std{0.67} & 63.18\std{0.79} & 65.7\std{6.8} & 1.36\std{0.16} & 17.8\std{2.5} & 46.58\std{0.48} & 63.28\std{0.57} & 76.3\std{6.2} & \second{67.017\std{13.307}}\\
FedEditor \venue{TIFS'25}~\cite{Yuan2025TIFS} & \second{45.46\std{0.61}} & \second{65.55\std{0.49}} & \best{65.71\std{0.94}} & \best{50.0\std{3.0}} & \best{0.22\std{0.02}} & \best{3.7\std{0.5}} & \second{22.24\std{0.69}} & \second{64.70\std{0.70}} & \second{64.52\std{0.86}} & \second{55.0\std{9.6}} & \second{0.82\std{0.16}} & \second{10.3\std{1.6}} & \second{38.27\std{0.41}} & \second{65.40\std{0.92}} & \second{64.83\std{0.32}} & \second{54.3\std{5.5}} & \second{0.49\std{0.08}} & \second{7.6\std{1.0}} & \best{45.36\std{0.46}} & \second{65.17\std{0.68}} & \best{42.7\std{3.9}} & \best{37.382\std{9.507}}\\
FLForensics \venue{NeurIPS'25}~\cite{Jia2025NeurIPS} & 46.72\std{0.46} & 64.50\std{0.58} & 63.91\std{0.80} & 58.7\std{3.9} & 1.82\std{0.24} & 29.4\std{3.2} & 23.70\std{0.65} & 64.10\std{0.53} & 63.61\std{0.42} & 62.7\std{9.4} & 2.66\std{0.41} & 36.7\std{5.5} & 39.46\std{0.27} & 64.71\std{0.84} & 63.76\std{0.32} & 59.3\std{5.3} & 1.64\std{0.29} & 26.1\std{4.8} & 46.91\std{0.73} & 64.08\std{0.51} & 297.6\std{25.8} & 247.405\std{20.865}\\
FLTracer \venue{TIFS'24}~\cite{Zhang2024TIFS} & 46.17\std{0.69} & 64.35\std{0.61} & 63.58\std{0.56} & 58.6\std{4.0} & 1.29\std{0.27} & 22.6\std{3.2} & 23.08\std{0.41} & 64.07\std{0.93} & 63.29\std{0.37} & 59.2\std{9.1} & 1.95\std{0.23} & 28.9\std{4.0} & 38.96\std{0.58} & 64.19\std{0.35} & 63.59\std{0.44} & 58.5\std{5.3} & 1.16\std{0.20} & 20.4\std{3.0} & 46.24\std{0.57} & 64.51\std{0.76} & 436.8\std{41.3} & 289.925\std{39.011}\\
\Xhline{0.8pt}
\cellcolor{oursname}\textit{\textbf{\methodname{}}} & \best{45.39\std{0.63}} & \best{65.86\std{0.35}} & \second{65.52\std{0.26}} & \second{50.6\std{3.5}} & \second{0.28\std{0.05}} & \second{4.8\std{0.9}} & \best{22.13\std{0.67}} & \best{65.45\std{0.62}} & \best{64.86\std{0.69}} & \best{51.5\std{12.2}} & \best{0.35\std{0.06}} & \best{5.7\std{1.1}} & \best{38.19\std{0.52}} & \best{65.72\std{0.73}} & \best{65.33\std{0.68}} & \best{51.7\std{8.3}} & \best{0.19\std{0.03}} & \best{4.1\std{0.6}} & \second{45.42\std{0.55}} & \best{66.02\std{0.81}} & \second{69.3\std{5.7}} & 166.231\std{11.282}\\
\Xhline{0.8pt}
\textit{Retrain} & 45.27\std{0.45} & 66.17\std{0.58} & 65.78\std{0.74} & 49.7\std{3.5} & 0.00 & - & 22.00\std{0.35} & 65.68\std{0.49} & 65.09\std{0.37} & 50.8\std{7.9} & 0.00 & - & 38.10\std{0.49} & 66.08\std{0.78} & 65.56\std{0.81} & 51.1\std{9.3} & 0.00 & - & 45.31\std{0.39} & 66.23\std{0.72} & - & -\\
\Xhline{0.8pt}
\multicolumn{23}{l}{\band{(c) Sample unlearning}}\\
\Xhline{0.6pt}
FedRecover \venue{S\&P'23}~\cite{Cao2023SP} & 42.28\std{0.55} & 63.88\std{0.68} & 63.50\std{0.63} & 57.7\std{5.7} & 0.63\std{0.07} & 9.2\std{1.0} & 29.99\std{0.88} & 64.55\std{0.80} & 63.94\std{0.84} & 60.0\std{21.1} & 0.76\std{0.09} & 10.8\std{1.4} & 37.39\std{0.54} & 64.97\std{0.54} & 64.14\std{0.66} & 56.0\std{22.7} & 0.54\std{0.06} & 8.6\std{1.0} & 42.41\std{0.62} & 63.92\std{0.79} & 104.3\std{8.1} & 146.903\std{16.298}\\
FedOSD \venue{AAAI'25}~\cite{Pan2025AAAI} & 42.64\std{0.73} & 63.53\std{0.84} & 62.51\std{0.87} & 65.2\std{4.6} & 0.89\std{0.16} & 13.6\std{2.4} & 30.45\std{0.27} & 63.23\std{0.45} & 62.42\std{0.74} & 70.0\std{29.2} & 1.14\std{0.17} & 15.7\std{1.9} & 37.64\std{0.38} & 63.79\std{0.83} & 63.03\std{0.27} & 60.0\std{13.3} & 0.78\std{0.09} & 12.7\std{1.6} & 42.89\std{0.47} & 63.18\std{0.66} & 59.2\std{5.4} & \second{47.690\std{14.940}}\\
FedEditor \venue{TIFS'25}~\cite{Yuan2025TIFS} & \second{41.83\std{0.51}} & \second{65.11\std{0.86}} & \second{64.93\std{0.66}} & \second{54.2\std{7.4}} & \second{0.44\std{0.09}} & \second{7.3\std{1.1}} & \second{29.47\std{0.41}} & \best{65.83\std{0.82}} & \second{64.53\std{0.65}} & \second{56.7\std{27.4}} & \second{0.55\std{0.08}} & \second{8.1\std{1.4}} & \second{36.91\std{0.30}} & \second{65.81\std{0.61}} & \second{65.05\std{0.63}} & \second{52.0\std{19.3}} & \second{0.37\std{0.05}} & \second{6.1\std{1.0}} & \second{41.96\std{0.58}} & \second{65.09\std{0.53}} & \best{31.4\std{2.3}} & \best{29.651\std{8.697}}\\
FLForensics \venue{NeurIPS'25}~\cite{Jia2025NeurIPS} & 44.93\std{0.85} & 64.44\std{0.84} & 63.32\std{0.69} & 62.3\std{5.3} & 1.38\std{0.25} & 24.8\std{3.8} & 33.08\std{0.27} & 63.99\std{0.67} & 63.70\std{0.42} & 66.7\std{15.7} & 1.60\std{0.21} & 27.2\std{4.6} & 39.75\std{0.85} & 64.64\std{0.40} & 64.62\std{0.82} & 58.0\std{17.5} & 1.22\std{0.19} & 21.9\std{3.0} & 44.52\std{0.69} & 64.47\std{0.72} & 128.6\std{10.7} & 162.365\std{13.851}\\
FLTracer \venue{TIFS'24}~\cite{Zhang2024TIFS} & 44.21\std{0.63} & 64.13\std{0.44} & 63.69\std{0.41} & 59.8\std{6.2} & 1.17\std{0.24} & 19.1\std{3.6} & 32.24\std{0.74} & 63.86\std{0.46} & 63.13\std{0.32} & 63.3\std{24.6} & 1.43\std{0.20} & 22.5\std{2.8} & 39.11\std{0.77} & 65.12\std{0.47} & 63.92\std{0.64} & 54.0\std{23.2} & 0.99\std{0.20} & 17.3\std{2.4} & 44.13\std{0.61} & 63.94\std{0.58} & 229.4\std{18.6} & 172.673\std{26.615}\\
\Xhline{0.8pt}
\cellcolor{oursname}\textit{\textbf{\methodname{}}} & \best{41.69\std{0.57}} & \best{66.04\std{0.62}} & \best{65.58\std{0.50}} & \best{52.1\std{5.2}} & \best{0.17\std{0.02}} & \best{3.6\std{0.7}} & \best{29.36\std{0.49}} & \second{65.65\std{0.76}} & \best{65.08\std{0.40}} & \best{53.3\std{32.2}} & \best{0.25\std{0.03}} & \best{4.6\std{0.7}} & \best{36.81\std{0.61}} & \best{66.56\std{0.43}} & \best{65.98\std{0.79}} & \best{50.0\std{21.6}} & \best{0.16\std{0.02}} & \best{3.4\std{0.7}} & \best{41.79\std{0.54}} & \best{65.88\std{0.71}} & \second{47.1\std{3.6}} & 67.017\std{8.149}\\
\Xhline{0.8pt}
\textit{Retrain} & 41.57\std{0.60} & 65.87\std{0.77} & 65.41\std{0.35} & 51.1\std{7.5} & 0.00 & - & 29.20\std{0.45} & 65.94\std{0.34} & 65.36\std{0.68} & 50.0\std{23.6} & 0.00 & - & 36.70\std{0.68} & 66.47\std{0.85} & 65.91\std{0.43} & 48.0\std{16.9} & 0.00 & - & 41.62\std{0.59} & 65.97\std{0.83} & - & -\\
\Xhline{1.2pt}
\end{tabular}}
\end{table*}

\section{Theoretical Analysis}
\label{sec:analysis}

This section states when the steps of Section~\ref{sec:method} are exact.

\begin{assumption}\label{ass:gradient}
Let $L(\boldsymbol{\theta})=\sum_{k=1}^{N}p_k\,\mathbb{E}_{(\mathbf{x},y)\sim\mathcal{D}_k}[\ell(\mathbf{x},y;\boldsymbol{\theta})]$ denote the objective of Eq.~\eqref{eq:fl-objective}. The aggregation of Eq.~\eqref{eq:fl-aggregation} and the counterfactual training that yields $\boldsymbol{\theta}^{-u}[\tau]$ both act as gradient steps $\mathbf{w}[\tau]=-\eta\nabla L(\boldsymbol{\theta}[\tau])$, and $L$ is quadratic with a fixed Hessian $\nabla^{2}L(\boldsymbol{\theta}[\tau])$ over any $q+1$ consecutive rounds.
\end{assumption}

\begin{lemma}\label{lem:curvature}
Under \textbf{Assumption}~\ref{ass:gradient}, Eq.~\eqref{eq:curvature} gives $\mathbf{M}[\tau]=\eta\,\mathbf{V}[\tau]^{\top}\nabla^{2}L(\boldsymbol{\theta}[\tau])\,\mathbf{V}[\tau]$, the curvature of $L$ inside the evidence subspace.
\end{lemma}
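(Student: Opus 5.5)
Under Assumption~\ref{ass:gradient}, we first compute the column differences of $\mathbf{W}[\tau+1]-\mathbf{W}[\tau]$ explicitly. Write $\mathbf{H}$ for the fixed Hessian over the window. The increments are gradient steps, $\mathbf{w}[s]=-\eta\nabla L(\boldsymbol{\theta}[s])$, and $L$ is quadratic. Hence
\[
\nabla L(\boldsymbol{\theta}[s+1])-\nabla L(\boldsymbol{\theta}[s])=\mathbf{H}\,\mathbf{w}[s],
\]
so $\mathbf{w}[s+1]-\mathbf{w}[s]=-\eta\mathbf{H}\mathbf{w}[s]$. The $j$-th column of $\mathbf{W}[\tau+1]$ is $\mathbf{w}[\tau-j+1]$, and the $j$-th column of $\mathbf{W}[\tau]$ is $\mathbf{w}[\tau-j]$. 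Columnwise, therefore,
\[
\mathbf{W}[\tau+1]-\mathbf{W}[\tau]=-\eta\mathbf{H}\mathbf{W}[\tau].
\]
This is the secant (quasi-Newton) identity, and it is the only place the assumption enters. Substituting it into Eq.~\eqref{eq:curvature} gives
\[
\mathbf{M}[\tau]=\eta\,\mathbf{V}[\tau]^{\top}\mathbf{H}\,\mathbf{W}[\tau]\mathbf{W}[\tau]^{\dagger}\mathbf{V}[\tau].
\]

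Next, we remove the factor $\mathbf{W}[\tau]\mathbf{W}[\tau]^{\dagger}$. By the defining properties of the Moore--Penrose pseudo-inverse, $\mathbf{W}\mathbf{W}^{\dagger}$ is the orthogonal projector onto the column space of $\mathbf{W}$. By Eq.~\eqref{eq:basis}, $\mathbf{V}[\tau]=\mathrm{orth}(\mathbf{W}[\tau])$ is an orthonormal basis of exactly that column space, so $\mathbf{W}[\tau]\mathbf{W}[\tau]^{\dagger}=\mathbf{V}[\tau]\mathbf{V}[\tau]^{\top}$. Using $\mathbf{V}[\tau]^{\top}\mathbf{V}[\tau]=\mathbf{I}$,
\[
\mathbf{W}[\tau]\mathbf{W}[\tau]^{\dagger}\mathbf{V}[\tau]=\mathbf{V}[\tau].
\]
This yields $\mathbf{M}[\tau]=\eta\mathbf{V}[\tau]^{\top}\mathbf{H}\mathbf{V}[\tau]$. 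Finally, we identify $\mathbf{H}$ with $\nabla^{2}L(\boldsymbol{\theta}[\tau])$, which is legitimate because the Hessian is constant on the window.

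The main obstacle is bookkeeping of the window rather than any analytic difficulty. The identity $\mathbf{W}[\tau+1]-\mathbf{W}[\tau]=-\eta\mathbf{H}\mathbf{W}[\tau]$ involves increments $\mathbf{w}[\tau-q],\dots,\mathbf{w}[\tau]$, and hence models $\boldsymbol{\theta}[\tau-q],\dots,\boldsymbol{\theta}[\tau+1]$. We must check that the quadratic-with-fixed-Hessian hypothesis covering these $q+1$ consecutive rounds is exactly what is needed. Each difference uses only a consecutive pair of models, and all pairs lie in the same window, so the same $\mathbf{H}$ appears in every column. If the window in the assumption were meant as $q+1$ models rather than rounds, we would re-index so that the oldest pair still lies inside it.

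We would also note that rank deficiency of $\mathbf{W}[\tau]$ causes no trouble. The projector identity holds for the pseudo-inverse regardless of rank, provided $\mathrm{orth}$ returns a basis of the actual column space, in which case $\mathbf{V}[\tau]$ simply has fewer than $q$ columns.
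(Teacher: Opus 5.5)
Your proof is correct and follows the paper's argument. You derive the secant identity $\mathbf{W}[\tau+1]-\mathbf{W}[\tau]=-\eta\nabla^{2}L(\boldsymbol{\theta}[\tau])\mathbf{W}[\tau]$ from the gradient-step and quadratic hypotheses, substitute it into Eq.~\eqref{eq:curvature}, and reduce $\mathbf{W}[\tau]\mathbf{W}[\tau]^{\dagger}\mathbf{V}[\tau]$ to $\mathbf{V}[\tau]$ by the projector property; you also spell out the columnwise derivation and the rank-deficient case, which the paper leaves implicit. One small tightening: only the gradients at $\boldsymbol{\theta}[\tau-q],\dots,\boldsymbol{\theta}[\tau]$ enter, never the one at $\boldsymbol{\theta}[\tau+1]$, so the $q+1$ rounds of Assumption~\ref{ass:gradient} fit exactly and your re-indexing hedge is unnecessary.
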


\begin{proof}
By \textbf{Assumption}~\ref{ass:gradient}, the increments satisfy $\mathbf{W}[\tau+1]-\mathbf{W}[\tau]=-\eta\nabla^{2}L(\boldsymbol{\theta}[\tau])\,\mathbf{W}[\tau]$. Substituting this into Eq.~\eqref{eq:curvature} and using $\mathbf{W}[\tau]\mathbf{W}[\tau]^{\dagger}\mathbf{V}[\tau]=\mathbf{V}[\tau]$ proves the claim.\end{proof}

\begin{assumption}\label{ass:confined}
For $t_u\le\tau\le t$, the evidence subspace is fixed, $\mathbf{V}[\tau]=\mathbf{V}[t_u]$, and $\nabla^{2}L(\boldsymbol{\theta}[\tau])$ maps every direction orthogonal to $\mathbf{V}[t_u]$ to zero, so training is flat there.
\end{assumption}

\begin{lemma}\label{lem:split}
Under \textbf{Assumptions}~\ref{ass:gradient} and~\ref{ass:confined}, for $t_u\le\tau<t$,
\begin{equation}
\begin{split}
    &\boldsymbol{\theta}[\tau+1]-\boldsymbol{\theta}^{-u}[\tau+1] \\
    &\;=\;\left(\mathbf{I}-\mathbf{V}[t_u]\,\mathbf{M}[\tau]\,\mathbf{V}[t_u]^{\top}\right)\left(\boldsymbol{\theta}[\tau]-\boldsymbol{\theta}^{-u}[\tau]\right).
\end{split}
\label{eq:residual-step}
\end{equation}
Hence the residual part orthogonal to $\mathbf{V}[t_u]$ never changes, and its coefficients inside $\mathbf{V}[t_u]$ are scaled by $\mathbf{I}-\mathbf{M}[\tau]$.
\end{lemma}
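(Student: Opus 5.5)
The plan is to subtract the two gradient-step recursions given by Assumption~\ref{ass:gradient} and linearize the difference exactly using the quadratic form of $L$. Write $\mathbf{r}[\tau]=\boldsymbol{\theta}[\tau]-\boldsymbol{\theta}^{-u}[\tau]$ and $\mathbf{H}=\nabla^{2}L(\boldsymbol{\theta}[\tau])$.

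For $\tau\ge t_u$, the contribution $u$ has already entered both trajectories' history, and after $t_u$ both evolve by gradient steps. We therefore have $\boldsymbol{\theta}[\tau+1]=\boldsymbol{\theta}[\tau]-\eta\nabla L(\boldsymbol{\theta}[\tau])$ and the same recursion for $\boldsymbol{\theta}^{-u}$. Because $L$ is quadratic with fixed Hessian $\mathbf{H}$ over the window, the gradient difference is exactly $\nabla L(\boldsymbol{\theta}[\tau])-\nabla L(\boldsymbol{\theta}^{-u}[\tau])=\mathbf{H}\mathbf{r}[\tau]$, with no remainder term. Subtracting the two recursions gives
\begin{equation*}
\mathbf{r}[\tau+1]=(\mathbf{I}-\eta\mathbf{H})\mathbf{r}[\tau].
\end{equation*}

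It remains to show $\eta\mathbf{H}=\mathbf{V}\mathbf{M}[\tau]\mathbf{V}^{\top}$, with $\mathbf{V}=\mathbf{V}[t_u]=\mathbf{V}[\tau]$ by Assumption~\ref{ass:confined}.

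1. Decompose $\mathbf{I}=\mathbf{V}\mathbf{V}^{\top}+\mathbf{P}_\perp$, where $\mathbf{P}_\perp=\mathbf{I}-\mathbf{V}\mathbf{V}^{\top}$.
2. Assumption~\ref{ass:confined} says $\mathbf{H}\mathbf{P}_\perp=0$. Since $\mathbf{H}$ is symmetric, also $\mathbf{P}_\perp\mathbf{H}=0$.
3. Hence $\mathbf{H}=\mathbf{V}\mathbf{V}^{\top}\mathbf{H}\mathbf{V}\mathbf{V}^{\top}$.
4. Lemma~\ref{lem:curvature} gives $\mathbf{V}^{\top}\eta\mathbf{H}\mathbf{V}=\mathbf{M}[\tau]$, so $\eta\mathbf{H}=\mathbf{V}\mathbf{M}[\tau]\mathbf{V}^{\top}$.

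This yields Eq.~\eqref{eq:residual-step}.

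For the consequences, apply $\mathbf{P}_\perp$ to Eq.~\eqref{eq:residual-step}. Since $\mathbf{P}_\perp\mathbf{V}=0$, we get $\mathbf{P}_\perp\mathbf{r}[\tau+1]=\mathbf{P}_\perp\mathbf{r}[\tau]$, so the orthogonal part never changes. Apply $\mathbf{V}^{\top}$ instead. Using $\mathbf{V}^{\top}\mathbf{V}=\mathbf{I}$, we get $\mathbf{V}^{\top}\mathbf{r}[\tau+1]=(\mathbf{I}-\mathbf{M}[\tau])\mathbf{V}^{\top}\mathbf{r}[\tau]$, so the in-subspace coefficients are scaled by $\mathbf{I}-\mathbf{M}[\tau]$.

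The main obstacle is justifying the step $\mathbf{P}_\perp\mathbf{H}=0$, which needs symmetry of the Hessian; this holds because $L$ is twice differentiable. A second point to check is that the fixed-Hessian window of Assumption~\ref{ass:gradient} covers round $\tau$ itself, so Lemma~\ref{lem:curvature} applies with the same $\mathbf{H}$ that appears in the gradient difference. I would state explicitly that $\mathbf{W}[\tau]\mathbf{W}[\tau]^{\dagger}$ is the projector onto the column space $\mathrm{span}\,\mathbf{V}[\tau]$, which is what Lemma~\ref{lem:curvature} uses.
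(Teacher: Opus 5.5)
Your proposal is correct and follows the paper's proof. Subtracting the two gradient-step recursions of Assumption~\ref{ass:gradient} gives multiplication of the residual by $\mathbf{I}-\eta\nabla^{2}L(\boldsymbol{\theta}[\tau])$, and flatness from Assumption~\ref{ass:confined} together with Lemma~\ref{lem:curvature} identifies $\eta\nabla^{2}L(\boldsymbol{\theta}[\tau])$ with $\mathbf{V}[t_u]\mathbf{M}[\tau]\mathbf{V}[t_u]^{\top}$. You spell out what the paper leaves implicit: the two-sided projection $\mathbf{H}=\mathbf{V}\mathbf{V}^{\top}\mathbf{H}\mathbf{V}\mathbf{V}^{\top}$, where symmetry of $\mathbf{H}$ follows at once because $L$ is quadratic, and the two projections that give the stated consequences.
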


\begin{proof}
By \textbf{Assumption}~\ref{ass:gradient}, the residual is multiplied by $\mathbf{I}-\eta\nabla^{2}L(\boldsymbol{\theta}[\tau])$ in each round, and \textbf{Assumption}~\ref{ass:confined} with \textbf{Lemma}~\ref{lem:curvature} gives $\eta\nabla^{2}L(\boldsymbol{\theta}[\tau])=\mathbf{V}[t_u]\mathbf{M}[\tau]\mathbf{V}[t_u]^{\top}$.\end{proof}

According to \textbf{Lemma}~\ref{lem:split}, an effect that persists long after $t_u$ lies mostly outside the subspace, where \method{} subtracts it as submitted.

\begin{proposition}\label{prop:backward}
For every $u\in\mathcal{U}$,
\begin{equation}
    \left\langle\mathbf{d}[t],\mathbf{z}_u[t]\right\rangle
    =\Big\langle\prod_{\tau=t_u}^{t-1}\left(\mathbf{I}-\mathbf{M}[\tau]\right)^{\top}\mathbf{d}[t],\ \mathbf{V}[t_u]^{\top}\mathbf{g}_u\Big\rangle,
    \label{eq:backward}
\end{equation}
where the product now places earlier rounds on the left.
\end{proposition}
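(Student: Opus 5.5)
The plan is to prove Eq.~\eqref{eq:backward} as a purely algebraic identity: it needs neither \textbf{Assumption}~\ref{ass:gradient} nor \textbf{Assumption}~\ref{ass:confined}, only Eq.~\eqref{eq:propagate} and the adjoint property of the Euclidean inner product on $\mathbb{R}^{q}$. The one point needing care is the ordering convention of the two products.

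First, I will fix notation. Write $\mathbf{A}_\tau=\mathbf{I}-\mathbf{M}[\tau]\in\mathbb{R}^{q\times q}$ and $\mathbf{b}_u=\mathbf{V}[t_u]^{\top}\mathbf{g}_u$. Because Eq.~\eqref{eq:propagate} places later rounds on the left, $\mathbf{z}_u[t]=\mathbf{A}_{t-1}\mathbf{A}_{t-2}\cdots\mathbf{A}_{t_u}\mathbf{b}_u$. Then
\[
\langle\mathbf{d}[t],\mathbf{z}_u[t]\rangle=\mathbf{d}[t]^{\top}\mathbf{A}_{t-1}\cdots\mathbf{A}_{t_u}\mathbf{b}_u .
\]

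Second, I will transpose the scalar, or equivalently apply $\langle\mathbf{x},\mathbf{A}\mathbf{y}\rangle=\langle\mathbf{A}^{\top}\mathbf{x},\mathbf{y}\rangle$ repeatedly. The rule $(\mathbf{A}\mathbf{B})^{\top}=\mathbf{B}^{\top}\mathbf{A}^{\top}$ gives $(\mathbf{A}_{t-1}\cdots\mathbf{A}_{t_u})^{\top}=\mathbf{A}_{t_u}^{\top}\mathbf{A}_{t_u+1}^{\top}\cdots\mathbf{A}_{t-1}^{\top}$. This is exactly the product $\prod_{\tau=t_u}^{t-1}(\mathbf{I}-\mathbf{M}[\tau])^{\top}$ with earlier rounds on the left, as stated in the proposition. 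Hence the right-hand side of Eq.~\eqref{eq:backward} equals the left.

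To make the link to the backward pass of Section~\ref{subsec:propagation} explicit, I can phrase this as an induction on $t-t_u$. Define $\mathbf{r}[t]=\mathbf{d}[t]$ and $\mathbf{r}[\tau]=\mathbf{A}_\tau^{\top}\mathbf{r}[\tau+1]$ for $\tau=t-1,\ldots,t_u$. The induction claim is that
\[
\langle\mathbf{r}[\tau],\mathbf{A}_{\tau-1}\cdots\mathbf{A}_{t_u}\mathbf{b}_u\rangle
\]
is invariant in $\tau$. Each step is one application of the adjoint identity. The base case $t=t_u$ is trivial, since both products are empty and equal $\mathbf{I}$. A consequence is that the vector reached at $t_u$ does not depend on $u$, so a single backward sweep serves every contribution sharing or preceding that $t_u$. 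This justifies the one-pass scoring claim.

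The main obstacle is not mathematical depth but bookkeeping. I must check that the reversed order in the backward product matches the paper's stated convention. I must also check that $\mathbf{d}[t]$ and $\mathbf{z}_u[t]$ live in the same $q$-dimensional coordinate system. Eq.~\eqref{eq:propagate} expresses $\mathbf{z}_u[t]$ in coefficients even though the bases $\mathbf{V}[\tau]$ may drift, so I will state that the identity holds for whatever $q\times q$ matrices $\mathbf{M}[\tau]$ are used, with no assumption that the subspace is fixed.
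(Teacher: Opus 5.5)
Your proposal is correct and follows essentially the paper's own argument: substitute Eq.~\eqref{eq:propagate} and move each factor $(\mathbf{I}-\mathbf{M}[\tau])$ across the inner product as its transpose. The reversal $(\mathbf{A}\mathbf{B})^{\top}=\mathbf{B}^{\top}\mathbf{A}^{\top}$ produces the earlier-rounds-on-the-left ordering, neither assumption is needed, and your induction only spells out the backward sweep that the paper describes in words.
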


\begin{proof}
Substitute Eq.~\eqref{eq:propagate} and move each factor to the other slot of the inner product as its transpose.\end{proof}

According to \textbf{Proposition}~\ref{prop:backward}, the first slot depends on $u$ only through $t_u$, so one backward pass returns the score of every contribution without any assumption.

\begin{theorem}\label{thm:elimination}
Under \textbf{Assumptions}~\ref{ass:gradient} and~\ref{ass:confined}, the quantity in Eq.~\eqref{eq:elimination} equals $\boldsymbol{\theta}[t]-\boldsymbol{\theta}^{-u}[t]$, so subtracting it from $\boldsymbol{\theta}[t]$ returns exactly $\boldsymbol{\theta}^{-u}[t]$.
\end{theorem}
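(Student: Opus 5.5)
The plan is to unroll the residual recursion of \textbf{Lemma}~\ref{lem:split} from the first round $t_u$ that contains $u$ up to $t$, and then split the result into its components orthogonal to and inside $\mathbf{V}[t_u]$. Write $\mathbf{P}=\mathbf{V}[t_u]\mathbf{V}[t_u]^{\top}$ for the orthogonal projector onto the evidence subspace; by \textbf{Assumption}~\ref{ass:confined} this projector is the same for every $\tau\in[t_u,t]$, and in particular $\mathbf{V}[t]=\mathbf{V}[t_u]$.

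\textbf{Initial condition.} First fix the residual at $t_u$. The counterfactual run agrees with the actual run up to the round where $u$ is aggregated. The only difference is that Eq.~\eqref{eq:fl-aggregation} adds the weighted updates collected in $\mathbf{g}_u$. I will take $\boldsymbol{\theta}[t_u]-\boldsymbol{\theta}^{-u}[t_u]=\mathbf{g}_u$. This is the natural reading of ``$\boldsymbol{\theta}[t_u]$ is the first global model that contains them.'' If $u$ spans several rounds, the gradient-step dynamics of \textbf{Assumption}~\ref{ass:gradient} between those rounds would have to be absorbed into the definition of $\mathbf{g}_u$. I regard fixing this initial condition as the main obstacle, and I will state it explicitly as the convention under which the theorem holds.

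\textbf{Unrolling and splitting.} Apply Eq.~\eqref{eq:residual-step} repeatedly to get
\begin{equation*}
\boldsymbol{\theta}[t]-\boldsymbol{\theta}^{-u}[t]=\prod_{\tau=t_u}^{t-1}\left(\mathbf{I}-\mathbf{V}[t_u]\mathbf{M}[\tau]\mathbf{V}[t_u]^{\top}\right)\mathbf{g}_u .
\end{equation*}
Decompose $\mathbf{g}_u=(\mathbf{I}-\mathbf{P})\mathbf{g}_u+\mathbf{V}[t_u]\mathbf{V}[t_u]^{\top}\mathbf{g}_u$.

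\emph{Orthogonal part.} Since $\mathbf{V}[t_u]^{\top}(\mathbf{I}-\mathbf{P})=\mathbf{0}$, each factor acts as the identity on $(\mathbf{I}-\mathbf{P})\mathbf{g}_u$. This reproduces the ``unmoved'' term of Eq.~\eqref{eq:elimination}.

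\emph{In-subspace part.} Use $\mathbf{V}[t_u]^{\top}\mathbf{V}[t_u]=\mathbf{I}$ to get, for any coefficient vector $\mathbf{c}$,
\begin{equation*}
\left(\mathbf{I}-\mathbf{V}[t_u]\mathbf{M}[\tau]\mathbf{V}[t_u]^{\top}\right)\mathbf{V}[t_u]\mathbf{c}=\mathbf{V}[t_u](\mathbf{I}-\mathbf{M}[\tau])\mathbf{c}.
\end{equation*}
So, by induction on the number of factors, the product maps $\mathbf{V}[t_u]\mathbf{V}[t_u]^{\top}\mathbf{g}_u$ to
\begin{equation*}
\mathbf{V}[t_u]\prod_{\tau=t_u}^{t-1}(\mathbf{I}-\mathbf{M}[\tau])\,\mathbf{V}[t_u]^{\top}\mathbf{g}_u ,
\end{equation*}
with later rounds on the left, matching the order in Eq.~\eqref{eq:propagate}. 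By Eq.~\eqref{eq:propagate} this equals $\mathbf{V}[t_u]\mathbf{z}_u[t]=\mathbf{V}[t]\mathbf{z}_u[t]$, which is the ``discounted'' term.

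\textbf{Conclusion.} Adding the two parts gives exactly the quantity in Eq.~\eqref{eq:elimination}. Subtracting it from $\boldsymbol{\theta}[t]$ therefore yields $\boldsymbol{\theta}^{-u}[t]$. The only care needed beyond the initial condition is bookkeeping: the ordering of the non-commuting factors $\mathbf{I}-\mathbf{M}[\tau]$ must agree with Eq.~\eqref{eq:propagate}, and \textbf{Lemma}~\ref{lem:curvature} must be applied with $\mathbf{V}[\tau]=\mathbf{V}[t_u]$ so that the $\mathbf{M}[\tau]$ in Eq.~\eqref{eq:residual-step} coincide with those used in $\mathbf{z}_u[t]$.
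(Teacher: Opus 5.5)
Your proof is correct and follows the paper's route. The paper also takes $\boldsymbol{\theta}[t_u]-\boldsymbol{\theta}^{-u}[t_u]=\mathbf{g}_u$ from Eq.~\eqref{eq:fl-aggregation} and applies \textbf{Lemma}~\ref{lem:split} from $\tau=t_u$ to $t-1$, so that the orthogonal part is left unchanged and $\mathbf{V}[t_u]^{\top}\mathbf{g}_u$ becomes $\mathbf{z}_u[t]$; your explicit induction, the factor-ordering check, and your flagging of the initial condition (which the paper simply asserts, and which does need care when $u$ spans several rounds) only make its one-line sketch more precise.
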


\textbf{Theorem}~\ref{thm:elimination} follows from \textbf{Lemma}~\ref{lem:split}. The residual starts from $\mathbf{g}_u$ at $t_u$ by Eq.~\eqref{eq:fl-aggregation}, and \textbf{Lemma}~\ref{lem:split} applied from $\tau=t_u$ to $t-1$ keeps its orthogonal part and turns $\mathbf{V}[t_u]^{\top}\mathbf{g}_u$ into $\mathbf{z}_u[t]$.

To relate the score to harm, let $\psi$ be the evidence loss behind $\mathbf{d}[t]$, so that $\mathbf{d}[t]=\mathbf{V}[t]^{\top}\nabla\psi(\boldsymbol{\theta}[t])$.

\begin{proposition}\label{prop:harm}
If $\psi$ is twice differentiable, then under \textbf{Assumptions}~\ref{ass:gradient} and~\ref{ass:confined},
\begin{equation}
\begin{split}
    \psi(\boldsymbol{\theta}[t])-\psi(\boldsymbol{\theta}^{-u}[t])
    &=\left\langle\mathbf{d}[t],\mathbf{z}_u[t]\right\rangle+O\big(\|\mathbf{g}_u\|^{2}\big) \\
    &\;\;+\big\langle(\mathbf{I}-\mathbf{V}[t_u]\mathbf{V}[t_u]^{\top})\nabla\psi(\boldsymbol{\theta}[t]),\,\mathbf{g}_u\big\rangle.
\end{split}
\label{eq:harm}
\end{equation}
\end{proposition}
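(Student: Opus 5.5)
The plan is to Taylor-expand $\psi$ around $\boldsymbol{\theta}[t]$ and then replace the residual by its exact form from \textbf{Theorem}~\ref{thm:elimination}. Write $\mathbf{r}=\boldsymbol{\theta}[t]-\boldsymbol{\theta}^{-u}[t]$. Since $\psi$ is twice differentiable, a second-order expansion gives
\begin{equation*}
\psi(\boldsymbol{\theta}[t])-\psi(\boldsymbol{\theta}^{-u}[t])=\langle\nabla\psi(\boldsymbol{\theta}[t]),\mathbf{r}\rangle+O(\|\mathbf{r}\|^{2}).
\end{equation*}
Under \textbf{Assumptions}~\ref{ass:gradient} and~\ref{ass:confined}, \textbf{Theorem}~\ref{thm:elimination} identifies
\begin{equation*}
\mathbf{r}=(\mathbf{I}-\mathbf{V}[t_u]\mathbf{V}[t_u]^{\top})\mathbf{g}_u+\mathbf{V}[t]\mathbf{z}_u[t],
\end{equation*}
and \textbf{Assumption}~\ref{ass:confined} also gives $\mathbf{V}[t]=\mathbf{V}[t_u]$. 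I would substitute this decomposition into the linear term and treat the two pieces separately.

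The second piece gives $\langle\nabla\psi(\boldsymbol{\theta}[t]),\mathbf{V}[t]\mathbf{z}_u[t]\rangle=\langle\mathbf{V}[t]^{\top}\nabla\psi(\boldsymbol{\theta}[t]),\mathbf{z}_u[t]\rangle=\langle\mathbf{d}[t],\mathbf{z}_u[t]\rangle$, using the stated identity $\mathbf{d}[t]=\mathbf{V}[t]^{\top}\nabla\psi(\boldsymbol{\theta}[t])$. The first piece gives $\langle\nabla\psi(\boldsymbol{\theta}[t]),(\mathbf{I}-\mathbf{V}\mathbf{V}^{\top})\mathbf{g}_u\rangle$. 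Because $\mathbf{I}-\mathbf{V}[t_u]\mathbf{V}[t_u]^{\top}$ is a symmetric orthogonal projector, this equals $\langle(\mathbf{I}-\mathbf{V}[t_u]\mathbf{V}[t_u]^{\top})\nabla\psi(\boldsymbol{\theta}[t]),\mathbf{g}_u\rangle$, which is exactly the last term of Eq.~\eqref{eq:harm}.

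What remains is to show that the remainder is $O(\|\mathbf{g}_u\|^2)$, which requires $\|\mathbf{r}\|=O(\|\mathbf{g}_u\|)$. The orthogonal part has norm at most $\|\mathbf{g}_u\|$. The in-subspace part is $\|\mathbf{z}_u[t]\|\le\prod_\tau\|\mathbf{I}-\mathbf{M}[\tau]\|\,\|\mathbf{g}_u\|$. By \textbf{Lemma}~\ref{lem:curvature}, $\mathbf{M}[\tau]=\eta\mathbf{V}^{\top}\nabla^2L\mathbf{V}$. I expect this step to be the main obstacle, since it needs the product to stay bounded. I would first try arguing that, under the standard step size condition $\eta\|\nabla^2L\|\le 2$ implicit in stable gradient training, each factor is nonexpansive on the PSD curvature. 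Failing that, I would simply absorb the finite window length $t-t_u$ into the constant, since $t$ and $t_u$ are fixed. A related subtlety is that the $O(\cdot)$ constant also depends on a bound on $\nabla^2\psi$ along the segment between $\boldsymbol{\theta}^{-u}[t]$ and $\boldsymbol{\theta}[t]$. I would state this as local boundedness of the Hessian, which follows from twice differentiability on the compact segment.
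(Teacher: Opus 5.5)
Your proposal is correct and is the paper's argument: expand $\psi$ to first order at $\boldsymbol{\theta}[t]$, insert the residual from \textbf{Theorem}~\ref{thm:elimination}, and read the discounted term as $\langle\mathbf{d}[t],\mathbf{z}_u[t]\rangle$ and the unmoved term as the last term of Eq.~\eqref{eq:harm}; your bound $\|\mathbf{r}\|=O(\|\mathbf{g}_u\|)$ adds the remainder control that the paper leaves implicit, and your fallback of absorbing the finite window $t-t_u$ into the constant is enough, so no step-size condition is needed. One small correction: twice differentiability alone does not bound $\nabla^{2}\psi$ on the segment, but you do not need that, since the Peano form of Taylor's theorem at $\boldsymbol{\theta}[t]$ already gives an $O(\|\mathbf{r}\|^{2})$ remainder.
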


\begin{proof}
Expand $\psi$ to first order around $\boldsymbol{\theta}[t]$ and insert the residual of \textbf{Theorem}~\ref{thm:elimination}. Its discounted term gives $\langle\mathbf{d}[t],\mathbf{z}_u[t]\rangle$, and its unmoved term gives the last term.\end{proof}

According to \textbf{Proposition}~\ref{prop:harm}, the score is the first-order harm of $u$ inside the evidence subspace. The last term needs $\mathbf{g}_u$ itself, which is why the second tier probes before it eliminates.

\begin{theorem}\label{thm:robust}
Let \textbf{Assumption}~\ref{ass:gradient} hold and $\mathbf{V}[\tau]=\mathbf{V}[t_u]$ for $t_u\le\tau\le t$. In place of the flatness in \textbf{Assumption}~\ref{ass:confined}, suppose that $\mathbf{0}\preceq\eta\nabla^{2}L(\boldsymbol{\theta}[\tau])\preceq 2\mathbf{I}$ and
\begin{equation}
    \big\|\eta\nabla^{2}L(\boldsymbol{\theta}[\tau])-\mathbf{V}[t_u]\,\mathbf{M}[\tau]\,\mathbf{V}[t_u]^{\top}\big\|\;\le\;\delta
    \label{eq:leak}
\end{equation}
for $t_u\le\tau<t$. Then the quantity in Eq.~\eqref{eq:elimination} differs from $\boldsymbol{\theta}[t]-\boldsymbol{\theta}^{-u}[t]$ by at most $(t-t_u)\,\delta\,\|\mathbf{g}_u\|$.
\end{theorem}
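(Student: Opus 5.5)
The plan is to compare two products of per-round maps applied to the same initial residual $\mathbf{g}_u$, and to telescope their difference. Write $\mathbf{A}_\tau=\mathbf{I}-\eta\nabla^{2}L(\boldsymbol{\theta}[\tau])$ for the true residual map and $\mathbf{B}_\tau=\mathbf{I}-\mathbf{V}[t_u]\mathbf{M}[\tau]\mathbf{V}[t_u]^{\top}$ for the map that \method{} implicitly uses.

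First, the true residual. By \textbf{Assumption}~\ref{ass:gradient}, exactly as in the proof of \textbf{Lemma}~\ref{lem:split}, the residual obeys $\mathbf{r}[\tau+1]=\mathbf{A}_\tau\mathbf{r}[\tau]$. By Eq.~\eqref{eq:fl-aggregation} it starts at $\mathbf{r}[t_u]=\mathbf{g}_u$. Hence
\begin{equation*}
\boldsymbol{\theta}[t]-\boldsymbol{\theta}^{-u}[t]=\mathbf{A}_{t-1}\cdots\mathbf{A}_{t_u}\mathbf{g}_u .
\end{equation*}

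Second, the elimination quantity. Because $\mathbf{V}[\tau]=\mathbf{V}[t_u]$ has orthonormal columns, we have $\mathbf{B}_\tau\mathbf{V}[t_u]=\mathbf{V}[t_u](\mathbf{I}-\mathbf{M}[\tau])$, and $\mathbf{B}_\tau$ acts as the identity on the orthogonal complement of $\mathbf{V}[t_u]$. Splitting $\mathbf{g}_u$ into its component in $\mathbf{V}[t_u]$ and its orthogonal component, the quantity in Eq.~\eqref{eq:elimination} equals
\begin{equation*}
\mathbf{B}_{t-1}\cdots\mathbf{B}_{t_u}\mathbf{g}_u .
\end{equation*}
This is the same algebra used for \textbf{Theorem}~\ref{thm:elimination}, except that flatness is not invoked; it is pure linear algebra.

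Third, the telescoping identity
\begin{equation*}
\prod\mathbf{A}-\prod\mathbf{B}=\sum_{s=t_u}^{t-1}\mathbf{A}_{t-1}\cdots\mathbf{A}_{s+1}(\mathbf{A}_s-\mathbf{B}_s)\mathbf{B}_{s-1}\cdots\mathbf{B}_{t_u}.
\end{equation*}
Each middle factor satisfies $\|\mathbf{A}_s-\mathbf{B}_s\|\le\delta$ by Eq.~\eqref{eq:leak}. The spectral condition $\mathbf{0}\preceq\eta\nabla^{2}L\preceq2\mathbf{I}$ places the eigenvalues of the symmetric matrix $\mathbf{A}_\tau$ in $[-1,1]$, so $\|\mathbf{A}_\tau\|\le1$. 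If also $\|\mathbf{B}_\tau\|\le1$, each of the $t-t_u$ summands is at most $\delta\|\mathbf{g}_u\|$, which gives the claimed bound.

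The main obstacle is showing $\|\mathbf{B}_\tau\|\le1$. On the complement of $\mathbf{V}[t_u]$, $\mathbf{B}_\tau$ is the identity. On $\mathbf{V}[t_u]$, the first attempt is to use \textbf{Lemma}~\ref{lem:curvature}, which gives $\mathbf{M}[\tau]=\eta\mathbf{V}^{\top}\nabla^{2}L\,\mathbf{V}$ under \textbf{Assumption}~\ref{ass:gradient} alone. That is a compression of a matrix with spectrum in $[0,2]$, so it is symmetric with spectrum in $[0,2]$, and therefore $\|\mathbf{I}-\mathbf{M}[\tau]\|\le1$. The subtle point is whether \textbf{Lemma}~\ref{lem:curvature} holds when the subspace carries no exact flatness assumption. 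It needs only the quadratic-with-fixed-Hessian and gradient-step hypotheses plus $\mathbf{W}\mathbf{W}^{\dagger}\mathbf{V}=\mathbf{V}$, and all of these remain in force here, so I expect it to go through.

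If that route fails, the fallback is to avoid bounding $\mathbf{B}$ at all, by ordering the telescope the other way: put the $\mathbf{A}$ factors on the right, and bound the $\mathbf{B}$ products on the left via $\|\mathbf{B}_\tau\|\le\|\mathbf{A}_\tau\|+\delta$. That route only yields a bound of the form $(1+\delta)^{t-t_u}$ times the claimed one, so the contraction argument above is the intended route.
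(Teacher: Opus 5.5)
Your proposal is correct and matches the paper's argument. Your telescoping sum is the unrolled form of the paper's one-step recursion $\|\boldsymbol{\xi}[\tau+1]\|\le\|\boldsymbol{\xi}[\tau]\|+\delta\|\mathbf{g}_u\|$, which rests on the same two ingredients: non-expansiveness of $\mathbf{I}-\eta\nabla^{2}L(\boldsymbol{\theta}[\tau])$, and $\|\mathbf{B}_{s-1}\cdots\mathbf{B}_{t_u}\mathbf{g}_u\|\le\|\mathbf{g}_u\|$. The paper only asserts the second (``whose norm stays at most $\|\mathbf{g}_u\|$''), and your justification of it does go through under \textbf{Assumption}~\ref{ass:gradient} alone, since \textbf{Lemma}~\ref{lem:curvature} makes $\mathbf{M}[\tau]$ a compression of $\eta\nabla^{2}L$ with spectrum in $[0,2]$.
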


\begin{proof}
Let $\boldsymbol{\xi}[\tau]$ be the gap between $\boldsymbol{\theta}[\tau]-\boldsymbol{\theta}^{-u}[\tau]$ and Eq.~\eqref{eq:elimination} at round $\tau$, so $\boldsymbol{\xi}[t_u]=\mathbf{0}$. Per round, the two are multiplied by the non-expansive $\mathbf{I}-\eta\nabla^{2}L(\boldsymbol{\theta}[\tau])$ and by $\mathbf{I}-\mathbf{V}[t_u]\mathbf{M}[\tau]\mathbf{V}[t_u]^{\top}$. Their difference, bounded by Eq.~\eqref{eq:leak}, acts on Eq.~\eqref{eq:elimination}, whose norm stays at most $\|\mathbf{g}_u\|$. Hence $\|\boldsymbol{\xi}[\tau+1]\|\le\|\boldsymbol{\xi}[\tau]\|+\delta\|\mathbf{g}_u\|$.\end{proof}

According to \textbf{Theorem}~\ref{thm:robust}, the elimination error grows at most linearly with the rounds elapsed since $t_u$, so a smaller $\kappa$ that eliminates sooner also eliminates more accurately.

\section{Performance Evaluation}
\label{sec:exp}

\subsection{Experimental Setup}
\label{subsec:setup}

We simulate the fleet in CARLA on a city map built with RoadRunner, shown in Fig.~\ref{fig:platform}(b)--(f). $N=50$ vehicles drive in this city at the same time, and a vehicle joins round $t$ only within the range of the RSU, which it reaches over IEEE 802.11. The driving model $\boldsymbol{\theta}$ is an LR-ASPP segmentation network with a MobileNetV3-Large backbone. It is trained separately on Foggy Cityscapes~\cite{Sakaridis2018IJCV}, ACDC~\cite{Sakaridis2021ICCV}, and BDD100K~\cite{Yu2020CVPR}, three driving datasets with the same 19 classes that span fog, other adverse weather, and many cities. Each vehicle receives a different share of the training split. The faulty contribution relabels pedestrians as road in the images of one vehicle during one window. The validation split is halved. One half serves as recent driving data, and a symptom is reported when the model labels an annotated pedestrian as road. The other half is held out for evaluation. On the physical track of Fig.~\ref{fig:testbed}, MentorPi robot cars train on Foggy Cityscapes with a Raspberry Pi 5 and exchange models and updates with a server over a local area network (LAN).

\begin{figure*}[!t]
\centering
\subfloat[Residual harm over rounds.]{\includegraphics[width=0.24\textwidth]{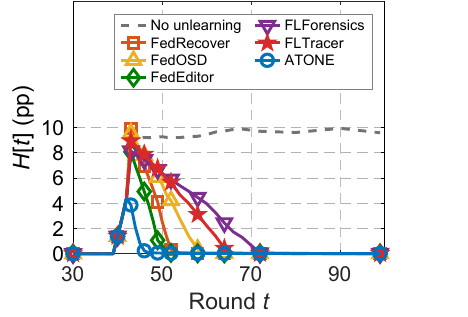}\label{fig:harm}}
\hfill
\subfloat[Top-1 attribution accuracy.]{\includegraphics[width=0.24\textwidth]{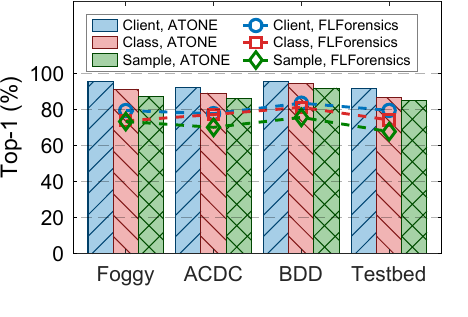}\label{fig:attribution}}
\hfill
\subfloat[Cost and delay versus $\kappa$.]{\includegraphics[width=0.24\textwidth]{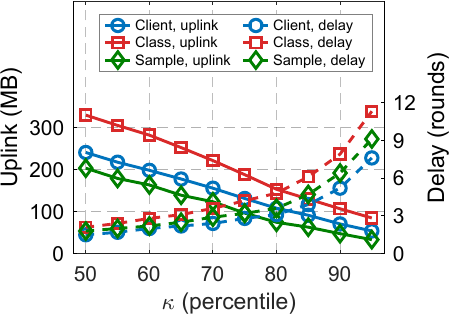}\label{fig:kappa}}
\hfill
\subfloat[Two-tier ablation.]{\includegraphics[width=0.24\textwidth]{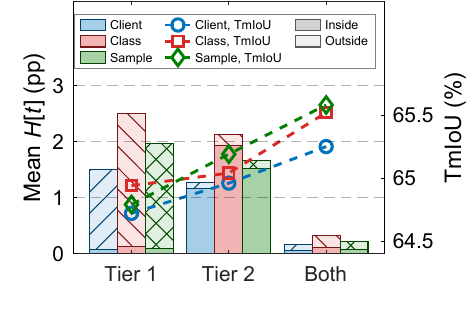}\label{fig:ablation}}
\caption{Results of \method{}. (a) Residual harm $H[t]$ under sample unlearning on Foggy Cityscapes. (b) Share of symptoms whose faulty contribution is ranked first under each unlearning type. (c) Uplink cost and reaction delay as $\kappa$ varies. (d) $\bar{H}$ inside and outside the subspace and $\mathrm{TmIoU}$ with one tier or both.}
\label{fig:results}
\end{figure*}

We adapt the FU metrics to segmentation. Unlearning efficacy is measured by the forget mIoU ($\mathrm{FmIoU}$), the mean intersection over union on the images of the faulty contribution scored against their faulty labels, and by the success rate of a membership inference attack ($\mathrm{MIA}$) against these images. Utility is measured by the retain mIoU ($\mathrm{RmIoU}$) on the training images of the other contributions and the test mIoU ($\mathrm{TmIoU}$) on the held-out half. An unlearned model should match a model retrained without the faulty contribution on all four. For Problem~$(\mathrm{P})$, the residual harm $H[t]$ is the rate at which annotated pedestrians in the held-out half are labeled as road, in excess of that of the retrained model, and $\bar{H}$ is its average over $T$ rounds. The reaction delay counts the rounds from the first symptom until $H[t]$ vanishes. The cost $C(\mathcal{U}[t])$ is the on-board time and uplink volume that the removal adds on the testbed. We compare \method{} with five baselines. FedRecover~\cite{Cao2023SP}, FedOSD~\cite{Pan2025AAAI}, and FedEditor~\cite{Yuan2025TIFS} are told the faulty contribution. FLForensics~\cite{Jia2025NeurIPS} and FLTracer~\cite{Zhang2024TIFS} trace it themselves and hand it to FedRecover for removal.

\subsection{Experimental Results}
\label{subsec:results}

\begin{figure}[t]
\centering
\subfloat[Fleet size.]{\includegraphics[width=0.485\linewidth]{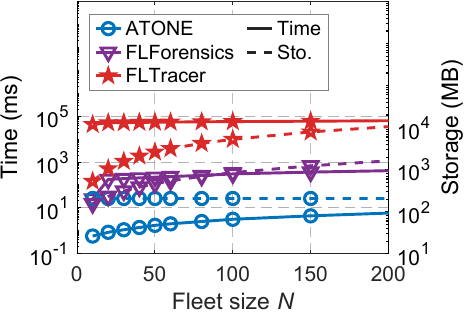}\label{fig:scale}}
\hfill
\subfloat[Subspace dimension.]{\includegraphics[width=0.485\linewidth]{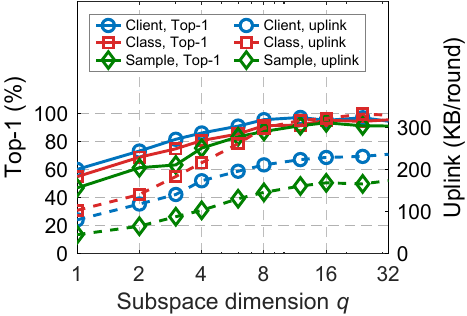}\label{fig:subspace}}
\caption{Scalability of \method{}. (a) Coordinator scoring time and storage as $N$ grows. (b) Top-1 attribution accuracy and uplink per vehicle per round as $q$ grows.}
\label{fig:scalability}
\end{figure}

Table~\ref{tab:main} compares the methods. \method{} stays within 0.36 points of Retrain in $\mathrm{FmIoU}$, $\mathrm{RmIoU}$, and $\mathrm{TmIoU}$, and within 3.3 points in $\mathrm{MIA}$. In eight of nine settings, its $\bar{H}$ is 43\% to 61\% below that of FedEditor, the strongest baseline, although FedEditor is told the faulty contribution. The two tracing baselines have delays of 12.3 to 36.7 rounds against at most 5.7 for \method{}. On the testbed, they also take 2.7 to 7.3 times its time and 1.5 to 4.1 times its uplink per removal.

In Fig.~\ref{fig:harm}, \method{} brings $H[t]$ below 0.1 points by round 47, four rounds before FedEditor and 25 before FLForensics. In Fig.~\ref{fig:attribution}, its Top-1 accuracy is 11.9 to 17.6 points above that of FLForensics. In Fig.~\ref{fig:kappa}, a larger $\kappa$ cuts the uplink up to 6.1 times but raises the delay to at most 11.3 rounds. In Fig.~\ref{fig:ablation}, the first tier alone leaves 1.43 to 2.37 points outside the subspace, and the second alone leaves 1.16 to 1.93 points inside it. Both tiers keep each part below 0.22 points. In Fig.~\ref{fig:scale}, the scoring time of \method{} stays below 6~ms up to $N=200$ and its storage at 206~MB, while FLTracer needs 63.5~s and 10.6~GB. In Fig.~\ref{fig:subspace}, the Top-1 accuracy rises little beyond the default $q=8$, while the uplink per vehicle keeps growing.

\section{Conclusion}
\label{sec:conclusion}

We studied federated unlearning for a vehicle fleet that learns of a faulty contribution only through delayed symptoms. \method{} infers and removes the contribution inside the aggregation rule without storing past checkpoints. On three driving datasets, it yields the lowest residual harm in eight of nine settings and stays within 0.36 points of Retrain.

\bibliographystyle{IEEEtran}
\bibliography{ref}

\end{document}